\documentclass[11pt]{article}

\oddsidemargin 0in    
\evensidemargin 0in
\topmargin -0.5in
\textheight 8.5 true in       
\textwidth 6.5 true in        
\date{}

\usepackage{siunitx}
\usepackage{outlines}
\usepackage{times}
\usepackage{amsmath}
\usepackage{amssymb}
\usepackage{amsthm}
\usepackage{bm}
\usepackage{hyperref}
\usepackage{cleveref}
\usepackage{amsthm}
\usepackage{color}
\usepackage{graphicx}
\usepackage{caption}
\usepackage{subcaption}
\usepackage[ruled,noend,linesnumbered]{algorithm2e} 
\usepackage{nameref}
\usepackage{outlines}
\usepackage{multirow}
\usepackage{float}
\usepackage[normalem]{ulem}
\usepackage[square,numbers]{natbib}
\usepackage{slashbox}
\usepackage{sectsty}

\usepackage{subfiles}

\crefname{figure}{Fig.}{Fig.}
\crefname{table}{Table}{Tables}
\crefname{equation}{Eq.}{Eq.}
\Crefname{figure}{Figure}{Figures}

\newtheorem{theorem}{Theorem}
\newtheorem{lemma}[theorem]{Lemma}
\newtheorem{corollary}[theorem]{Corollary}

\crefname{theorem}{Theorem}{Theorem}
\crefname{corollary}{Corollary}{Corollary}
\crefname{fact}{Fact}{Fact}
\crefname{lemma}{Lemma}{Lemma}

\newcommand{\B}[1]{\mathbb{#1}}
\newcommand{\BB}[0]{}

\newcommand{\ahat}{\hat{\mathbf{a}}}

\newcommand{\lyap}{\mathcal{V}}
\newcommand{\atilde}{\tilde{\BB a}}
\newcommand{\stackedstate}{\begin{bmatrix} \BB s \\ \atilde \end{bmatrix}}
\newcommand{\stackedstatedot}{\begin{bmatrix} \dot{\BB s} \\ \dot{\atilde} \end{bmatrix}}
\newcommand{\expo}[1]{\mathrm{e}^{#1}}
\newcommand{\lammin}[0]{\lambda_\text{\normalfont{min}}}
\newcommand{\lammax}[0]{\lambda_\text{\normalfont{max}}}

\newcommand{\revision}[1]{{#1}}
\newcommand{\rev}[1]{{#1}}
\newcommand{\revB}[1]{#1}
\newcommand{\revC}[1]{#1}

\newcommand{\LOne}{{\mathcal{L}_1}}

\newcommand{\kth}[0]{$k^{\mathrm{th}}$ }

\newcommand{\inv}[0]{^{-1}}
\newcommand{\metric}[0]{\mathcal{M}}
\newcommand{\unf}[0]{u_\text{NF}}

\newcommand{\nftransfer}[0]{Neural-Fly-Transfer}
\newcommand{\nf}[0]{Neural-Fly}
\newcommand{\nfconstant}[0]{Neural-Fly-Constant}
\newcommand{\LearningAlg}[0]{Domain Adversarially Invariant Meta-Learning}

\newcommand{\DAML}[0]{DAIML}
\newcommand{\nowind}{\SI{0}{km/h}}
\newcommand{\lowwind}{\SI{15.1}{km/h}}
\newcommand{\medwind}{\SI{30.6}{km/h}}
\newcommand{\highwind}{\SI{43.6}{km/h}}
\newcommand{\sinwind}{$30.6+8.6\sin(t)$ \SI{}{km/h}}

\newcommand{\lowwindalt}{\SI{4.2}{m/s}}
\newcommand{\medwindalt}{\SI{8.5}{m/s}}
\newcommand{\highwindalt}{\SI{12.1}{m/s}}
\newcommand{\sinwindalt}{$8.5+2.4\sin(t)$ \SI{}{m/s}}
\newcommand{\sota}{{state-of-the-art}}

\title{Neural-Fly \revB{Enables} Rapid Learning \\ for Agile Flight in Strong Winds}

\author
{Michael O'Connell$^{\dagger}$, Guanya Shi$^{\dagger}$, Xichen Shi, \\ Kamyar Azizzadenesheli, Anima Anandkumar, Yisong Yue, Soon-Jo Chung$^{*}$
\\
\\
\normalsize{Division of Engineering and Applied Science, California Institute of Technology} \\
\normalsize{$^\dagger$ The first two authors contributed equally to this article. Alphabetical order.} \\
\normalsize{$^*$ Corresponding authors. Email: sjchung@caltech.edu} \\
\\
\normalsize{\textcolor{blue}{This is the accepted version of Science Robotics Vol. 7, Issue 66, eabm6597 (2022)}}\\
\normalsize{DOI: 10.1126/scirobotics.abm6597 \quad Video: \url{https://youtu.be/TuF9teCZX0U}} \\
\normalsize{Data and training code: \url{https://github.com/aerorobotics/neural-fly}}
}

\begin{document} 


\maketitle 
\sectionfont{\MakeUppercase}

\begin{abstract}
Executing safe and precise flight maneuvers in dynamic high-speed winds is \revB{important} for the ongoing commoditization of uninhabited aerial vehicles (UAVs). However, since the relationship between various wind conditions and its \revB{effect} on aircraft maneuverability is not well understood, it is challenging to design effective robot controllers using traditional control design methods. We present \nf, a learning-based approach that allows rapid online adaptation by incorporating pre-trained representations through deep learning. \nf~builds on two key observations that aerodynamics in different wind conditions share a common representation and that the wind-specific part lies in a low-dimensional space. To that end, \nf~uses a \revB{proposed} learning algorithm, \LearningAlg~(\DAML), to learn the shared representation, only using 12 minutes of flight data. With the learned representation as a basis, \nf~then uses a composite adaptation law to update a set of linear coefficients for mixing the basis elements. When evaluated under challenging wind conditions generated with the Caltech Real Weather Wind Tunnel with wind speeds up to \rev{\highwind~(\highwindalt)}, \nf~achieves precise flight control with \revB{substantially} smaller tracking error than state-of-the-art nonlinear and adaptive controllers. In addition to strong empirical performance, the exponential stability of \nf~results in robustness guarantees.
Finally, our control design \rev{extrapolates to unseen wind conditions, is shown to be effective for outdoor flights with only on-board sensors}, and can transfer across drones with minimal performance degradation.
\end{abstract}


\section{Introduction}
\begin{figure}[htbp]
    \centering
    \includegraphics[width=1.0\linewidth]{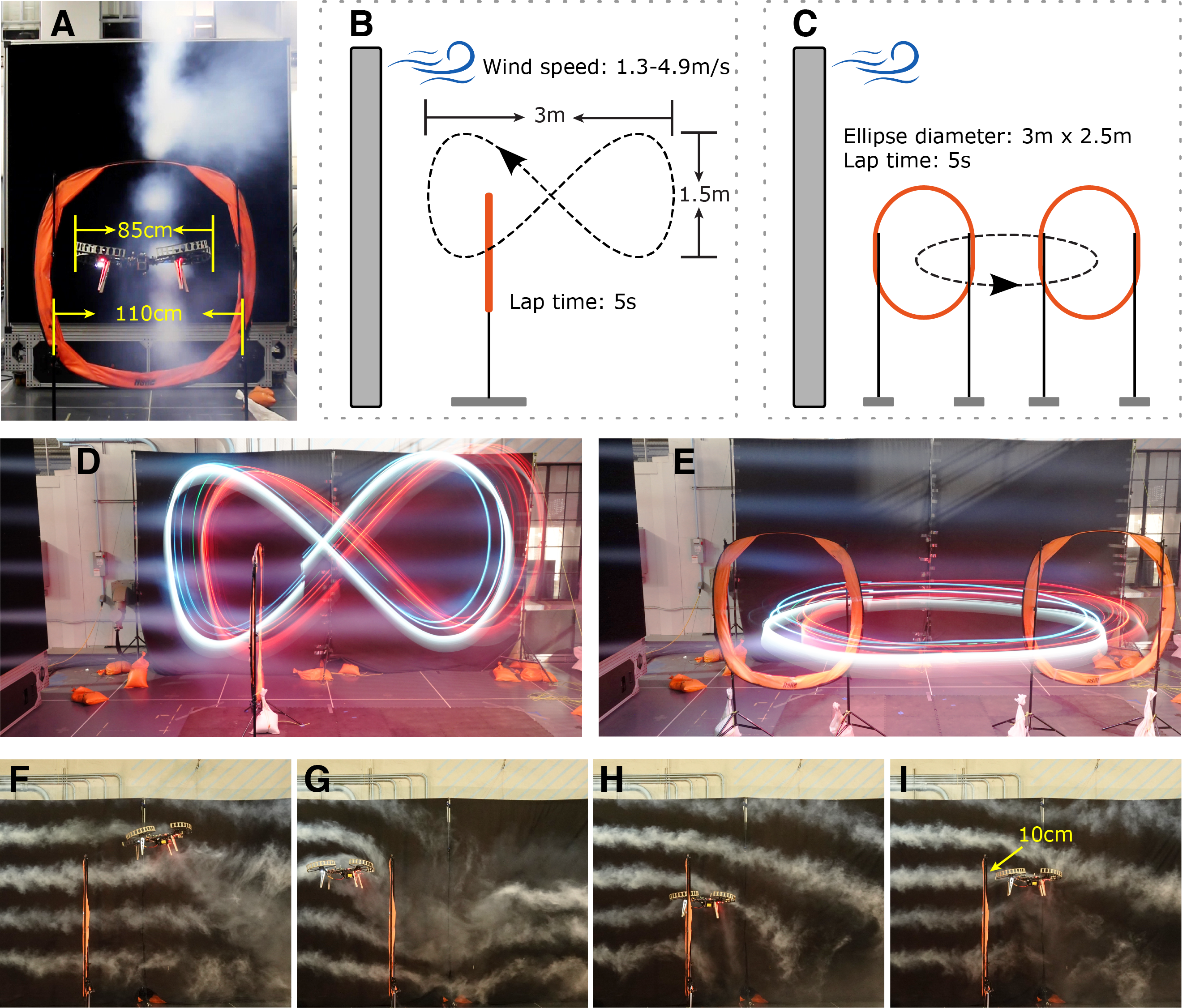}
    \caption{\textbf{Agile flight through narrow gates.} (\textbf{A}) Caltech Real Weather Wind Tunnel system, the quadrotor UAV, and the gate. In our flight tests, the UAV follows an agile trajectory through narrow gates, which are slightly wider than the UAV itself, under challenging wind conditions. (\textbf{B-C}) Trajectories used for the gate tests. In (B), the UAV follows a figure-8 through one gate, with wind speed \SI{3.1}{m/s} or time-varying wind condition. In (C), the UAV follows an ellipse in the horizontal plane through two gates, with wind speed \SI{3.1}{m/s}. (\textbf{D-E}) Long-exposure photos (with an exposure time of \SI{5}{s}) showing one lap in two tasks. (\textbf{F-I}) High-speed photos (with a shutter speed of 1/200\SI{}{s}) showing the moment the UAV passed through the gate and the interaction between the UAV and the wind.}
    \label{fig:intro}
\end{figure}

The commoditization of uninhabited aerial vehicles (UAVs) \revB{requires that the control of these vehicles become more precise and agile}. For example, drone delivery requires transporting goods to a narrow target area in various weather conditions; drone rescue and search require entering and searching collapsed buildings with little space; urban air mobility needs a flying car to follow a planned trajectory closely to avoid collision in the presence of strong unpredictable winds.

Unmodeled and often complex aerodynamics are among the most \revB{notable} challenges to precise flight control. 
Flying in windy environments (as shown in \cref{fig:intro}) introduces even more complexity because of the unsteady aerodynamic interactions between the drone, the induced airflow, and the wind (see \cref{fig:intro}(F) for a smoke visualization). These unsteady and nonlinear aerodynamic effects substantially degrade the performance of conventional UAV control methods that neglect to account for them in the control design. Prior approaches partially capture these effects with simple linear or quadratic air drag models, which limit the tracking performance in agile flight and cannot be extended to external wind conditions \cite{foehn2021time,faessler_differential_2018}. 
Although more complex aerodynamic models can be derived from computational fluid dynamics \cite{ventura2018high}, such modelling is often computationally expensive, and is limited to steady non-dynamic wind conditions. \rev{Adaptive control addresses this problem by estimating linear parametric uncertainty in the dynamical model in real time to improve tracking performance. Recent \sota~in quadrotor flight control has used adaptive control methods that directly 
estimate the unknown aerodynamic force without assuming the structure of the underlying physics, but relying on high-frequency and low-latency control \cite{tal_accurate_2021,mallikarjunan_l1_2012,pravitra_L1-adaptive_2020,hanover_performance_2021}.}
In parallel, there has been increased interest in data-driven modeling of aerodynamics (e.g., \cite{shi2019neural,shi2020neural,shi2021neural,torrente2021data}),
however existing approaches cannot effectively adapt in changing or unknown environments such as time-varying wind conditions.

In this \revB{article}, we present a data-driven approach called \nf, which is a deep-learning-based trajectory tracking controller that learns to quickly adapt to rapidly-changing wind conditions. 
Our method, depicted in \cref{fig:block-diagram}, advances and offers insights into both adaptive flight control and deep-learning-based robot control.
Our experimental demonstrates that \nf~achieves centimeter-level position-error tracking of an agile and challenging trajectory in dynamic wind conditions on a \rev{standard} UAV.

Our method has two main components: an offline learning phase and an online adaptive control phase used as real-time online learning. For the offline learning phase, we have developed \LearningAlg~(\DAML) that learns a wind-condition-independent deep neural network (DNN) representation of the aerodynamics in a data-efficient manner. 
The output of the DNN is treated as a set of basis functions that represent the aerodynamic effects. This representation is adapted to different wind conditions by updating a set of linear coefficients that mix the output of the DNN. 
\DAML~\rev{is data efficient and} uses only 12 total minutes of flight data in \rev{just} 6 different wind conditions to train the DNN. 
\DAML~incorporates several key features which not only improve the data efficiency but also are informed by the downstream online adaptive control phase. In particular, \DAML~uses spectral normalization \cite{shi2019neural,bartlett2017spectrally} to control the Lipschitz property of the DNN to improve generalization to unseen data and \rev{provide closed-loop stability and robustness guarantees.} \DAML~also uses a discriminative network, which ensures that the learned representation is wind-\rev{invariant} and that the wind-dependent information is only contained in the linear coefficients that are adapted in the online control phase.

For the online adaptive control phase, we have developed a regularized composite adaptive control law, which \rev{we} derived from a fundamental understanding of how the learned representation interacts with the closed-loop control system and which \rev{we} support \rev{with} rigorous theory. The use of composite adaptive control was inspired by~\cite{slotine1991applied,slotine_composite_1989}. The adaptation law updates the wind-dependent linear coefficients using a composite of the position tracking error term and the aerodynamic force prediction error term. Such a principled approach effectively guarantees stable and fast adaptation to any wind condition and robustness against imperfect learning. \rev{Although this adaptive control law could be used with a number of learned models,} the speed of adaptation is further aided by the concise representation learned from \DAML.

\begin{figure}[htbp]
    \centering
    \includegraphics[width=1.0\linewidth]{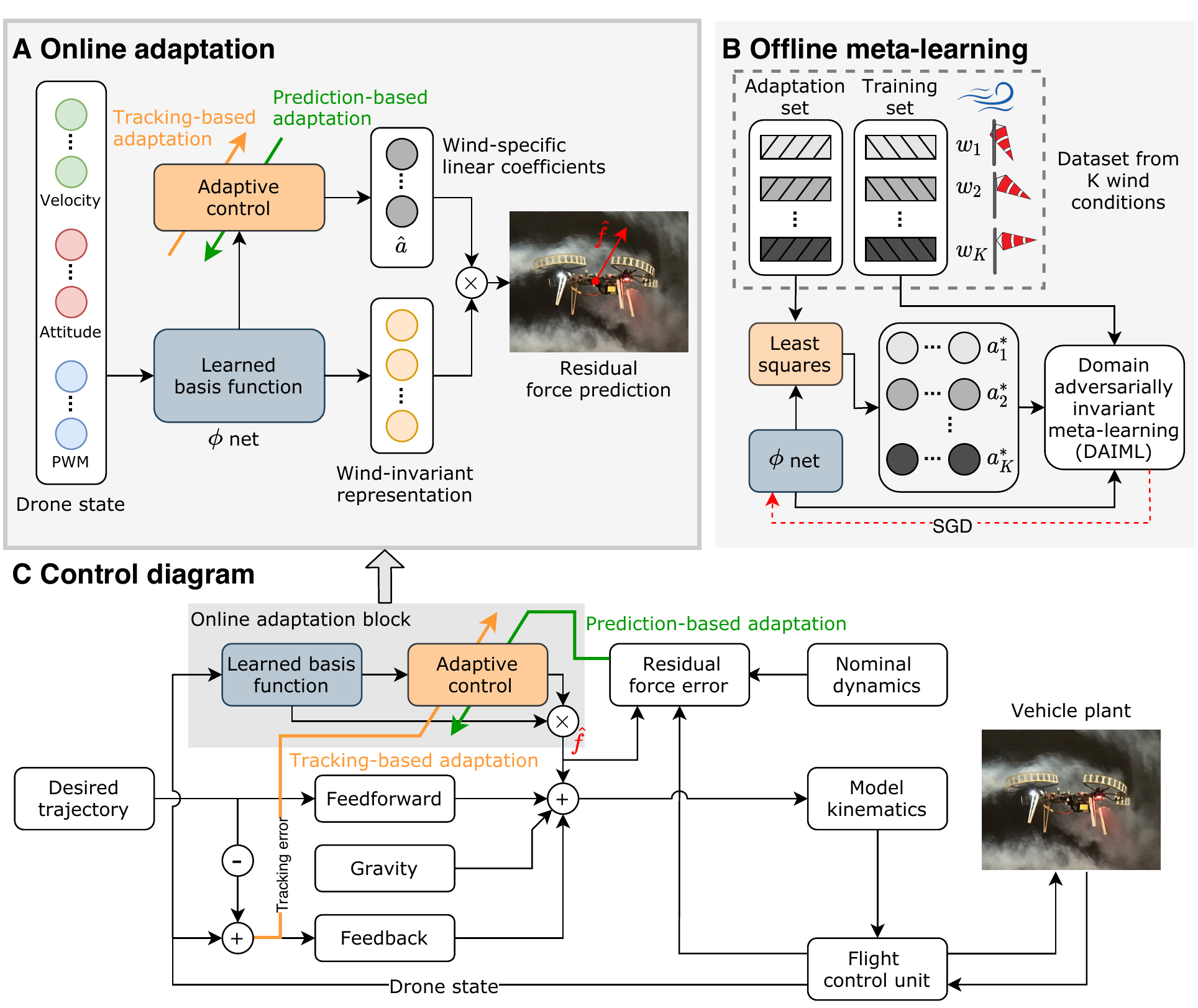}
    \caption{\textbf{Offline meta-learning and online adaptive control design.} (\textbf{A}) The online adaptation block in our adaptive controller. Our controller leverages the meta-trained basis function $\phi$, which is a wind-invariant representation of the aerodynamic effects, and uses composite adaptation (that is, including tracking-error-based and prediction-error-based adaptation) to update wind-specific linear weights $\hat{a}$. The output of this block is the wind-effect force estimate, $\hat{f}=\phi\hat{a}$. (\textbf{B}) The illustration of our meta-learning algorithm \DAML. We collected data from wind conditions $\{w_1,\cdots,w_K\}$ and applied Algorithm \ref{alg:DIML} to train the $\phi$ net. (\textbf{C}) The diagram of our control method, where the grey part corresponds to (A). Interpreting the learned block as an aerodynamic force allows it to be incorporated into the feedback control easily.}
    \label{fig:block-diagram}
\end{figure}

\rev{Using \nf, we report an average improvement of \SI{66}{\%} over a nonlinear tracking controller, \SI{42}{\%} over an $\LOne$ adaptive controller, and \SI{35}{\%} over an Incremental Nonlinear Dynamics Inversion (INDI) controller.}
These results are all accomplished using standard quadrotor UAV hardware, while running the PX4's default regulation attitude control. Our tracking performance is competitive even compared to related work without external wind disturbances and with more complex hardware (for example, \cite{tal_accurate_2021} requires a 10-time higher control frequency and onboard optical sensors for direct motor speed feedback). 
\rev{We also compare \nf~with two variants of our method: \nftransfer, which uses a learned representation trained on data from a different drone, and \nfconstant, which only uses our adaptive control law with a trivial non-learning basis.
\nftransfer~demonstrates that our method is robust to changes in vehicle configuration and model mismatch. \nfconstant, $\LOne$, and INDI all directly adapt to the unknown dynamics without assuming the structure of the underlying physics, and they have similar performance.}
Furthermore, we demonstrate that our method enables a new set of capabilities that allow the UAV to fly through low-clearance gates following agile trajectories in gusty wind conditions (\cref{fig:intro}).


\rev{\subsection*{Related Work for Precise Quadrotor Control}}
\rev{
Typical quadrotor control consists of a cascaded or hierarchical control structure which separates the design of the position controller, attitude controller, and thrust mixer (allocation). Commonly-used off-the-shelf controllers, such as PX4, design each of these loops as proportional-integral-derivative (PID) regulation controllers \cite{meier_pixhawk_2012}. The control performance can be substantially improved by designing each layer of the cascaded controller as a tracking controller using the concept of differential flatness \cite{mellinger_minimum_2011}, or, as has recently been popular, using a single optimization based controller such as model predictive control (MPC) to directly compute motor speed commands from desired trajectories. State-of-the-art tracking performance relies on MPC with fast adaptive inner loops to correct for modeling errors \cite{tal_accurate_2021,hanover_performance_2021}, however, this approach requires full custom flight controllers. In contrast, our method is designed to be integrated with a typical PX4 flight controller, yet it achieves state-of-the-art flight performance in wind.
}

\rev{
Prior work on agile quadrotor control has achieved impressive results by considering aerodynamics~\cite{tal_accurate_2021,hanover_performance_2021,torrente2021data,faessler_differential_2018}. However, those approaches require specialized onboard hardware~\cite{tal_accurate_2021}, full custom flight control stacks~\cite{tal_accurate_2021,hanover_performance_2021}, or cannot adapt to external wind disturbances~\cite{torrente2021data,faessler_differential_2018}.
For example, state-of-the-art tracking performance has been demonstrated using incremental nonlinear dynamics inversion to estimate aerodynamic disturbance forces, with a root-mean-square tracking error of \SI{6.6}{cm} and drone ground speeds up to \SI{12.9}{m/s} \cite{tal_accurate_2021}. However, \cite{tal_accurate_2021} relies on high-frequency control updates (\SI{500}{Hz}) and direct motor speed feedback using optical encoders to rapidly estimate external disturbances. Both are challenging to deploy on standard systems. 
\cite{hanover_performance_2021} simplifies the hardware setup and does not require optical motor speed sensors and has demonstrated state-of-the-art tracking performance. However, \cite{hanover_performance_2021} relies on a high-rate $\LOne$ adaptive controller inside a model predictive controller and uses a racing drone with a fully customized control stack.
\cite{torrente2021data} leverages an aerodynamic model learned offline and represented as Gaussian Processes. 
However, \cite{torrente2021data} cannot adapt to unknown or changing wind conditions and provides no theoretical guarantees. Another recent work focuses on deriving simplified rotor-drag models that are differentially flat \cite{faessler_differential_2018}. 
However, \cite{faessler_differential_2018} focuses on horizontal, $xy-$plane trajectories at ground speeds of \SI{4}{m/s} without external wind, where the thrust is more constant than ours, achieves $\sim$\SI{6}{cm} tracking error \cite{faessler_differential_2018}, uses an attitude controller running at \SI{4000}{Hz}, and is not extensible to faster flights as pointed out in \cite{torrente2021data}.
}\\

\subsection*{Relation between \nf~and Conventional Adaptive Control}
Adaptive control theory has been extensively studied for online control and identification problems with parametric uncertainty, for example, unknown linear coefficients for mixing known basis functions \cite{slotine1991applied,ioannou1996robust,krstic1995nonlinear,narendra2012stable,farrell_adaptive_2006,wise2006adaptive}. There are three common aspects of adaptive control which must be addressed carefully in any well-designed system and which we address in \nf: designing suitable basis functions for online adaptation, stability of the closed-loop system, and persistence of excitation, which is a property related to robustness against disturbances. \revB{These challenges arise due to the coupling between the unknown underlying dynamics and the online adaptation.
This coupling precludes naive combinations of online learning and control. For example, gradient-based parameter adaptation has well-known stability and robustness issues as discussed in \cite{slotine1991applied}. The idea of composite adaptation was first introduced in~\cite{slotine_composite_1989}.
}

The basis functions play a crucial role in the performance of adaptive control, but designing or selecting proper basis functions might be challenging. A good set of basis functions should reflect important features of the underlying physics. In practice, basis functions are often designed using physics-informed modeling of the system, such as the nonlinear aerodynamic modeling in \cite{shi_adaptive_2020}. However, physics-informed modeling requires a tremendous amount of prior knowledge and human labor, and is often still inaccurate. Another approach is to use random features as the basis set, such as random Fourier features \rev{ \cite{rahimi2007random,lale_model_2021}}, which can model all possible underlying physics as long as the number of features is large enough. However, the high-dimensional feature space is not optimal for a specific system because many of the features might be redundant or irrelevant. Such suboptimality and redundancy not only increase the computational burden but also slow down the convergence speed of the adaptation process.

Given a set of basis functions, naive adaptive control designs may cause instability and fragility in the closed-loop system, due to the nontrivial coupling between the adapted model and the system dynamics. 
In particular, asymptotically stable adaptive control cannot guarantee robustness against disturbances and so exponential stability is desired.  Even so, often, existing adaptive control methods only guarantee exponential stability when the desired trajectory is persistently exciting, by which information about all of the coefficients (including irrelevant ones) is constantly provided at the required spatial and time scales. In practice, persistent excitation requires either a succinct set of basis functions or perturbing the desired trajectory, which compromises tracking performance.

\rev{
Recent multirotor flight control methods, including INDI \cite{tal_accurate_2021} and $\LOne$ adaptive control, presented in \cite{mallikarjunan_l1_2012} and demonstrated inside a model predictive control loop in \cite{hanover_performance_2021}, achieve good results by abandoning complex basis functions. Instead, these methods directly estimate the aerodynamic residual force vector. The residual force is observable, thus, these methods bypass the challenge of designing good basis functions and the associated stability and persistent excitation issues. However, these methods suffer from lag in estimating the residual force and encounter the the filter design performance trade of reduced lag versus amplified noise. \nfconstant~only uses \nf's composite adaptation law to estimate the residual force, and therefore, \nfconstant~also falls into this class of adaptive control structures. The results of this \revB{article} demonstrate that the inherent estimation lag in these existing methods limits performance on agile trajectories and in strong wind conditions.
}

\nf~solves the aforementioned issues of basis function design and adaptive control stability, using newly developed methods for meta-learning and composite adaptation that can be seamlessly integrated together. \nf~uses \DAML~and flight data to learn an effective and compact set of basis functions, represented as a DNN. The regularized composite adaptation law uses the learned basis functions to quickly respond to wind conditions. \nf~enjoys fast adaptation because of the conciseness of the feature space, and it guarantees closed-loop exponential stability and robustness without assuming persistent excitation.

Related to \nf, neural network based adaptive control has been researched extensively, but by and large was limited to shallow or single-layer neural networks without pretraining. 
Some early works focus on shallow or single-layer neural networks with unknown parameters which are adapted online \cite{farrell_adaptive_2006, nakanishi_locally_2002,chen1995adaptive,johnson2003limited,narendra1997adaptive}.
A recent work applies this idea to perform an impressive quadrotor flip \cite{bisheban_geometric_2021}. However, the existing neural network based adaptive control work does not employ multi-layer DNNs, and lacks a principled and efficient mechanism to pretrain the neural network before deployment.
Instead of using shallow neural networks, recent trends in machine learning highly rely on DNNs due to their representation power \cite{lecun2015deep}. 
In this work, we leverage modern deep learning advances to pretrain a DNN which represents the underlying physics compactly and effectively.

\subsection*{Related Work in Multi-environment Deep Learning for Robot Control}
Recently, researchers have been addressing the data and computation requirements for DNNs to help the field progress towards the fast online-learning paradigm. In turn, this progress has been enabling adaptable DNN-based control in dynamic environments.  
The most popular learning scheme in dynamic environments is meta-learning, or ``learning-to-learn'', which aims to learn an efficient model from data across different tasks or environments \cite{finn2017model,meta-learning-survey,shi2021meta}. The learned model, typically represented as a DNN, ideally should be capable of rapid adaptation to a new task or an unseen environment given limited data. For robotic applications, meta-learning has shown great potential for enabling autonomy in highly-dynamic environments.
For example, it has enabled quick adaptation against unseen terrain or slopes for legged robots \cite{nagabandi2018learning,song2020rapidly}, changing suspended payload for drones \cite{belkhale2021model}, and unknown operating conditions for wheeled robots \cite{mckinnon2021meta}.

In general, learning algorithms typically can be decomposed into two phases: offline learning and online adaptation. In the offline learning phase, the goal is to learn a model from data collected in different environments, such that the model contains shared knowledge or features across all environment, for example, learning aerodynamic features shared by all wind conditions. In the online adaptation phase, the goal is to adapt the offline-learned model, given limited online data from a new environment or a new task, for example, fine tuning the aerodynamic features in a specific wind condition.

There are two ways that the offline-learned model can be adapted. In the first class, the adaptation phase adapts the whole neural network model, typically using one or more gradient descent steps \cite{finn2017model,nagabandi2018learning,belkhale2021model,clavera2018model}. However, due to the notoriously data-hungry and high-dimensional nature of neural networks, for real-world robots it is still impossible to run such adaptation on-board as fast as the feedback control loop (e.g., $\sim$100Hz for quadrotor). Furthermore, adapting the whole neural network often lacks explainability and robustness and could generate unpredictable outputs that make the closed-loop unstable.

In the second class (including \nf), the online adaptation only adapts a relatively small part of the learned model, for example, the last layer of the neural network \cite{o2021meta,mckinnon2021meta,richards2021adaptive,peng2021linear}.
The intuition is that, different environments share a common representation (e.g., the wind-invariant representation in \cref{fig:block-diagram}(A)), and the environment-specific part is in a low-dimensional space (e.g., the wind-specific linear weight in \cref{fig:block-diagram}(A)), which enables the real-time adaptation as fast as the control loop. 
In particular, the idea of integrating meta-learning with adaptive control is first presented in our prior work \cite{o2021meta}, later followed by \cite{richards2021adaptive}. However, the representation learned in \cite{o2021meta} is ineffective and the tracking performance in \cite{o2021meta} is similar as the baselines; \cite{richards2021adaptive} focuses on a planar and fully-actuated rotorcraft simulation without experiment validation and there is no stability or robustness analysis. \nf~instead learns an effective representation using a \revB{our} meta-learning algorithm called \DAML, demonstrates state-of-the-art tracking performance on real drones, and achieves non-trivial stability and robustness guarantees.

Another popular deep-learning approach for control in dynamic environments is robust policy learning via domain randomization \cite{lee2020learning,tobin2017domain,ramos2019bayessim}. The key idea is to train the policy with random physical parameters such that the controller is robust to a range of conditions. For example, the quadrupedal locomotion controller in~\cite{lee2020learning} retains its robustness over challenging natural terrains. 
However, robust policy learning optimizes average performance under a broad range of conditions rather than achieving precise control by adapting to specific environments.

\section{Results}
\label{sec:results}
In this section, we first discuss the experimental platform for data collection and experiments. Second, we discuss the key conceptual reasoning behind our combined method of our meta-learning algorithm, called \DAML,~and our composite adaptive controller with stability guarantees. Third, we discuss several experiments to quantitatively compare the closed-loop trajectory-tracking performance of our methods to \revision{a nonlinear baseline method and two state-of-the-art adaptive flight control methods, and we observe our methods reduce the average tracking error substantially.} In order to demonstrate the new capabilities brought by our methods, we present agile flight results in gusty winds, where the UAV must quickly fly through narrow gates that are only slightly wider than the vehicle. \revision{Finally, we show our methods are also applicable in outdoor agile tracking tasks without external motion capture systems.}

\begin{figure}[htbp]
    \centering
    \includegraphics[width=0.8\linewidth]{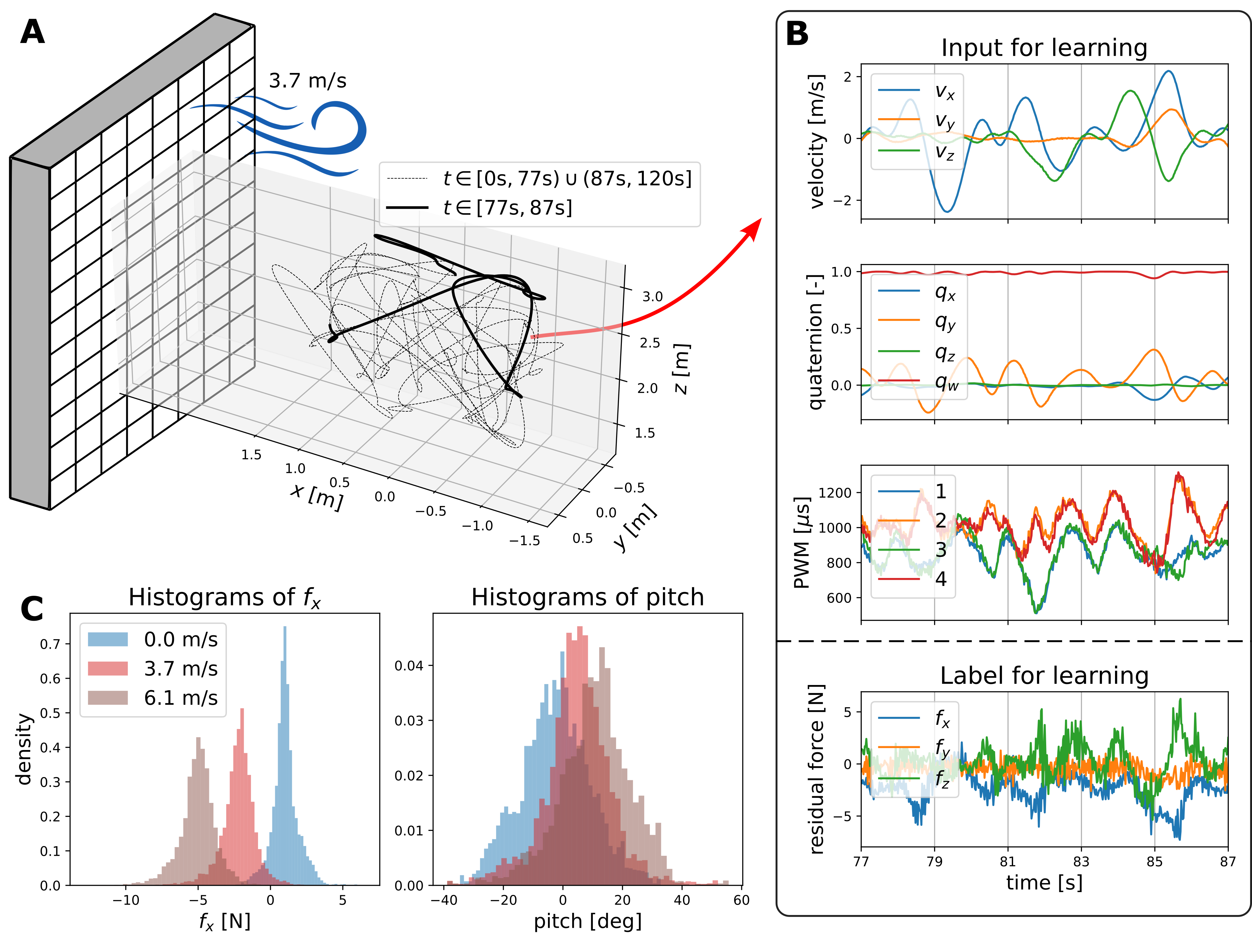}
    \caption{\textbf{Training data collection.} (\textbf{A}) The xyz position along a two-minute randomized trajectory for data collection with wind speed \SI{8.3}{km/h} (\SI{3.7}{m/s}), in the Caltech Real Weather Wind Tunnel. (\textbf{B}) A typical 10-second trajectory of the inputs (velocity, attitude quaternion, and motor speed PWM command) and label (offline calculation of aerodynamic residual force) for our learning model, corresponding to the highlighted part in (A). (\textbf{C}) Histograms showing data distributions in different wind conditions. (\textbf{C}) \textbf{Left:} distributions of the $x$-component of the wind-effect force, $f_x$. This shows that the aerodynamic effect changes as the wind varies. (\textbf{C}) \textbf{Right:} distributions of the pitch, a component of the state used as an input to the learning model. This shows that the shift in wind conditions causes a distribution shift in the input.}
    \label{fig:training-data}
\end{figure}

\subsection*{Experimental Platform}
All of our experiments are conducted at Caltech's Center for Autonomous Systems and Technologies (CAST). The experimental setup consists of an OptiTrack motion capture system with \revision{12 infrared} cameras for localization \rev{streaming position measurements at \SI{50}{Hz}}, a WiFi router for communication, the Caltech Real Weather Wind Tunnel
for generating dynamic wind conditions, and a custom-built quadrotor UAV. 
The Real Weather Wind Tunnel is composed of 1296 individually controlled fans and can generate uniform wind speeds of up to \revision{\highwind} in its 3x3x5\SI{}{m} test section. \revision{For outdoor flight, the drone is also equipped with a Global Positioning System (GPS) module and an external antenna.}
We now discuss the design of the UAV and the wind condition in detail.

\paragraph{UAV Design} We built a quadrotor UAV for our primary data collection and all experiments, shown in \cref{fig:intro}(A).
The quadrotor weighs \revision{\SI{2.6}{kg}} with a thrust to weight ratio of $2.2$. The UAV is equipped with a Pixhawk flight controller running PX4, an open-source commonly used drone autopilot platform \cite{meier_pixhawk_2012}. The UAV incorporates a Raspberry Pi 4 onboard computer running a Linux operation system, which performs real-time computation and adaptive control and interfaces with the flight controller through MAVROS, an open-source set of communication drivers for UAVs. State estimation is performed using the built-in PX4 Extended Kalman Filter (EKF), which fuses inertial measurement unit (IMU) data with global position estimates from OptiTrack motion capture system \revision{(or the GPS module for outdoor flight tasks)}. The UAV platform features a wide-X configuration, measuring \SI{85}{cm} in width, \SI{75}{cm} in length, and \SI{93}{cm} diagonally, and tilted motors for improved yaw authority. \rev{This general hardware setup is standard and similar to many quadrotors.} We refer to the supplementary materials (\ref{sec:supp-drone-configuration}) for further configuration details.

We implemented our control algorithm and the baseline control methods in the position control loop in Python, and run it on the onboard Linux computer at \SI{50}{Hz}. The PX4 was set to the offboard flight mode and received thrust and attitude commands from the position control loop. The built-in PX4 multicopter attitude controller was then executed at the default rate, which is a linear PID regulation controller on the quaternion error. The online inference of the learned representation is also in Python via PyTorch, which is an open source deep learning framework.

To study the generalizability and robustness of our approach, we also use an Intel Aero Ready to Fly drone for data collection. This dataset is used to train a representation of the wind effects on the Intel Aero drone, which we test on our custom UAV. The Intel Aero drone (weighing \SI{1.4}{kg}) has a symmetric X configuration, \SI{52}{cm} in width and \SI{52}{cm} in length, without tilted motors (see the supplementary materials for further details). 

\paragraph{Wind Condition Design} To generate dynamic and diverse wind conditions for the data collection and experiments, we leverage the state-of-the-art Caltech Real Weather Wind Tunnel system (\cref{fig:intro}(A)). The wind tunnel is a \SI{3}{m} by \SI{3}{m} array of $1296$ independently controllable fans capable of generating wind conditions up to \highwind. The distributed fans are controlled in real-time by a Python-based Application Programming Interface (API). For data collection and flight experiments, we designed two types of wind conditions. For the first type, each fan has uniform and constant wind speed between \nowind~and \rev{\highwind~(\highwindalt)}.
The second type of wind follows a sinusoidal function in time, e.g., \rev{\sinwind}. \rev{Note that the training data only covers constant wind speeds up to \SI{6.1}{m/s}.} To visualize the wind, we use $5$ smoke generators to indicate the direction and intensity of the wind condition (see examples in \cref{fig:intro} and Video 1).

\begin{figure}[htbp]
    \centering
    \includegraphics[width=1.0\linewidth]{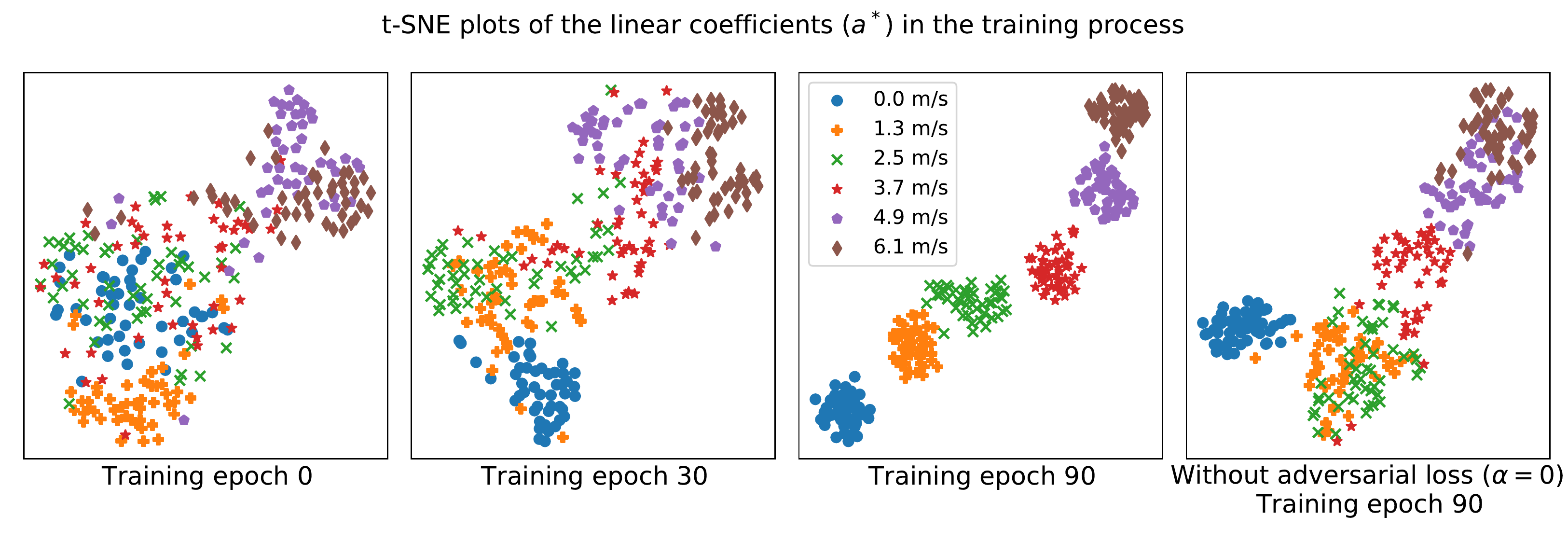}
    \caption{\textbf{t-SNE plots showing the evolution of the linear weights ($a^*$) during the training process.} As the number of training epochs increases, the distribution of $a^*$ becomes more clustered with similar wind speed clusters near each other. The clustering also has a physical meaning: after training convergence, the right top part corresponds to a higher wind speed. This suggests that \DAML~successfully learned a basis function $\phi$ shared by all wind conditions, and the wind-dependent information is contained in the linear weights. Compared to the case without the adversarial regularization term (using $\alpha=0$ in Algorithm \ref{alg:DIML}), the learned result using our algorithm is also more explainable, in the sense that the linear coefficients in different conditions are more disentangled.}
    \label{fig:training-tsne}
\end{figure}

\subsection*{Offline Learning and Online Adaptive Control Development}
\paragraph{Data Collection and Meta-Learning using \DAML} To learn an effective representation of the aerodynamic effects, we have a custom-built drone \rev{follow} a randomized trajectory for 2 minutes each in six different static wind conditions, with speeds ranging from \nowind~to \SI{22.0}{km/h}. \rev{However, in experiments we used wind speeds up to \highwind~(\highwindalt) to study how our methods extrapolate to unseen wind conditions (e.g., \cref{fig:error-wind-plot}).} The data is collected at \SI{50}{Hz} with a total of $36,000$ data points. \Cref{fig:training-data}(A) shows the data collection process, and Fig.~\ref{fig:training-data}(B) shows the inputs and labels of the training data, under one wind condition of \SI{13.3}{km/h}~(\SI{3.7}{m/s}). \Cref{fig:training-data}(C) shows the distributions of input data (pitch) and label data ($x-$component of the aerodynamic force) in different wind conditions. Clearly, a shift in wind conditions causes distribution shifts in both input domain and label domain, which motivates the algorithm design of \DAML. The same data collection process is repeated on the Intel Aero drone, \rev{to study whether the learned representation can generalize to a different drone.}

On the collected datasets for both our custom drone and the Intel Aero drone, we apply the \DAML~algorithm to learn two representations $\phi$ of the wind effects. The learning process is done offline on a normal desktop computer, and depicted in \cref{fig:block-diagram}(B).
\Cref{fig:training-tsne} shows the evolution of the linear coefficients ($a^*$) during the learning process, where \DAML~learns a representation of the aerodynamic effects shared by all wind conditions, and the linear coefficient contains the wind-specific information. Moreover, the learned representation is explainable in the sense that the linear coefficients in different wind conditions are well disentangled (see \cref{fig:training-tsne}).
We refer to the ``\nameref{sec:methods}'' section for more details.

\paragraph{Baselines and the Variants of Our Method} 
We briefly introduce three variants of our method and the \revision{three} baseline methods considered (details are provided in the ``\nameref{sec:methods}'' section). 
\revision{Each of the controllers is implemented in the position control loop and outputs a force command. The force command is fed into a kinematics block to determine a corresponding attitude and thrust, similar to \cite{mellinger_minimum_2011}, which is sent to the PX4 flight controller.}
\revision{The three baselines include: 
globally exponentially-stabilizing nonlinear tracking controller for quadrotor control
\cite{morgan_swarm_2015,shi2019neural,shi2018nonlinear},
incremental nonlinear dynamics inversion (INDI) linear acceleration control \cite{tal_accurate_2021}, and 
$\LOne$ adaptive control \cite{mallikarjunan_l1_2012,hanover_performance_2021}.
The primary difference between these baseline methods and \nf~is how the controller compensates for the unmodelled residual force (that is, each baseline method has the same control structure, in \cref{fig:block-diagram}(C), except for the estimation of the $\hat{f}$). In the case of the nonlinear baseline controller an integral term accumulates error to correct for the modeling error. The integral gain is limited by the stability \rev{of the interaction with the position and velocity error feedback} leading to slow model correction. In contrast, both INDI and $\LOne$ decouple the adaptation rate from the PD gains, which allow for fast adaptation. Instead, these methods are limited by more fundamental design factors, such as system delay, measurement noise, and controller rate.
}

Our method is illustrated in \cref{fig:block-diagram}(A,C) and replaces the integral feedback term with an adapted learning term. 
The deployment of our approach depends on the learned representation function $\phi$, and our primary method and two variants consider a different choice of $\phi$. \nf~is our primary method using a representation learned from the dataset collected by the custom-built drone, which is the same drone used in experiments. \revB{\nftransfer~uses the \nf~algorithm where the representation is trained} using the dataset collected by the aforementioned Intel Aero drone. \revB{\nfconstant~uses the online adaptation algorithm from \nf, but} the representation is an artificially designed constant mapping. \nftransfer~is included to show the generalizability and robustness of our approach with drone transfer, i.e., using a different drone in experiments than data collection. Finally, \nfconstant~demonstrates the benefit of using a better representation learned from the proposed meta-learning method \DAML. Note that \nfconstant~is a composite adaptation form of a Kalman-filter disturbance observer, that is a Kalman-filter \revision{augmented with} a tracking error update term.

\subsection*{Trajectory Tracking Performance}
\revision{We quantitatively compare the performance of the aforementioned control methods when the UAV follows a \SI{2.5}{m} wide, \SI{1.5}{m} tall figure-8 trajectory with a lap time of \SI{6.28}{s} under constant, uniform wind speeds of \nowind, \lowwind~(\lowwindalt), \medwind~(\medwindalt), and \highwind~(\highwindalt) and under time-varying wind speeds of \sinwind~(\sinwindalt).}

\rev{
The flight trajectory for each of the experiments is shown in \cref{fig:trajectory}, which includes a warm up lap and six \SI{6.28}{s} laps. The nonlinear baseline integral term compensates for the mean model error within the first lap. As the wind speed increases, the aerodynamic force variation becomes larger and we notice a substantial performance degradation. INDI and $\LOne$ both improve over the nonlinear baseline, but INDI is more robust than $\LOne$ at high wind speeds. \nfconstant~outperforms INDI except during the two most challenging tasks: \highwind~and sinusoidal wind speeds. The learning based methods, \nf~and \nftransfer, outperform all other methods in all tests. \nf~outperforms \nftransfer~slightly, which is because the learned model was trained on data from the same drone and thus better matches the dynamics of the vehicle.
}

\rev{
In \cref{tab:flight-performance}, we tabulate the root-mean-square position error and mean position error values over the six laps for each experiment.
\Cref{fig:error-wind-plot} shows how the mean tracking error changes for each controller as the wind speed increases, and includes the standard deviation for the mean lap position error. In all cases, \nf~and \nftransfer~outperform the state-of-the-art baseline methods, including the \medwind, \highwind, and sinusoidal wind speeds all of which exceed the wind speed in the training data. 
}
All of these results presents a clear trend: adaptive control substantially outperforms the nonlinear baseline which relies on integral-control, and learning markedly improves adaptive control.

\begin{figure}[htbp]
    \centering
    \includegraphics[width=0.97\linewidth]{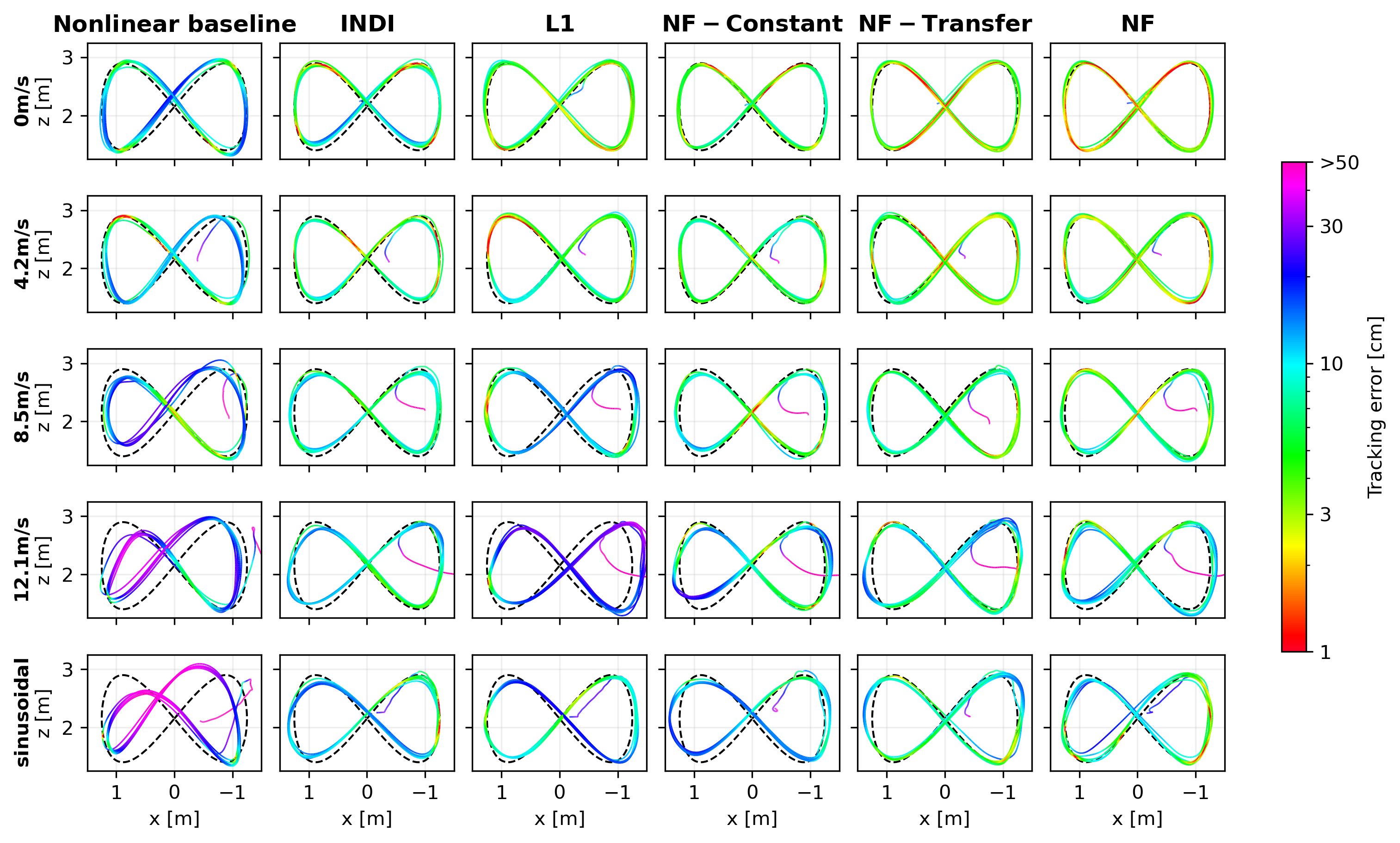}
    \caption{
    \revision{
    \textbf{Depiction of the trajectory tracking performance of each controller in several wind conditions.} The baseline nonlinear controller can track the trajectory well, however, the performance \revB{substantially} degrades at higher wind speeds. INDI, $\LOne$, and \nfconstant~have similar performance and improve over the nonlinear baseline by estimating the aerodynamic disturbance force quickly. \nf~and \nftransfer~use a learned model of the aerodynamic effects and adapt the model in real time to achieve lower tracking error than the other methods.
    }}
    \label{fig:trajectory}
\end{figure}

\subsection*{Agile Flight Through Narrow Gates}
Precise flight control in dynamic and strong wind conditions has many applications, such as rescue and search, delivery, and transportation. In this section, we present a challenging drone flight task in strong winds, where the drone must follow agile trajectories through narrow gates, which are only slightly wider than the drone. The overall result is depicted in \cref{fig:intro} and Video 1. As shown in \cref{fig:intro}(A), the gates used in our experiments are \SI{110}{cm} in width, which is only slightly wider than the drone (\SI{85}{cm} wide, \SI{75}{cm} long). To visualize the trajectory using long-exposure photography, our drone is deployed with four main light emitting diodes (LEDs) on its legs, where the two rear LEDs are red and the front two are white. There are also several small LEDs on the flight controller, the computer, and the motor controllers, which can be seen in the long-exposure shots.

\paragraph{Task Design} 
We tested our method on three different tasks. In the first task (see \cref{fig:intro}(B,D,F-I) and Video 1), the desired trajectory is a \SI{3}{m} by \SI{1.5}{m} figure-8 in the $x-z$ plane with a lap time of \SI{5}{s}. A gate is placed at the left bottom part of the trajectory. The minimum clearance is about \SI{10}{cm} (see \cref{fig:intro}(I), which requires that the controller precisely tracks the trajectory. The maximum speed and acceleration of the desired trajectory are \SI{2.7}{m/s} and \SI{5.0}{m/s^2}, respectively. The wind speed is \SI{3.1}{m/s}. The second task (see Video 1) is the same as the first one, except that it uses a more challenging, time-varying wind condition, $3.1+1.8\sin(\frac{2\pi}{5}t)$\SI{}{m/s}. In the third task (see \cref{fig:intro}(C,E) and Video 1), the desired trajectory is a \SI{3}{m} by \SI{2.5}{m} ellipse in the $x-y$ plane with a lap time of \SI{5}{s}. We placed two gates on the left and right sides of the ellipse. As with the first task, the wind speed is \SI{3.1}{m/s}.

\paragraph{Performance} For all three tasks, we used our primary method, \nf, where the representation is learned using the dataset collected by the custom-built drone. \Cref{fig:intro}(D,E) are two long-exposure photos with an exposure time of \SI{5}{s}, which is the same as the lap time of the desired trajectory. We see that our method precisely tracked the desired trajectories and flew safely through the gates (see Video 1). These long-exposure photos also captured the smoke visualization of the wind condition. We would like to emphasize that the drone is wider than the LED light region, since the LEDs are located on the legs (see \cref{fig:intro}(A)).  \Cref{fig:intro}(F-I) are four high-speed photos with a shutter speed of 1/200\SI{}{s}. These four photos captured the moment the drone passed through the gate in the first task, as well as the complex interaction between the drone and the wind. We see that the aerodynamic effects are complex and non-stationary and depend on the UAV attitude, the relative velocity, and aerodynamic interactions between the propellers and the wind.

\begin{figure} 
    \centering
    \includegraphics[width=0.8\linewidth]{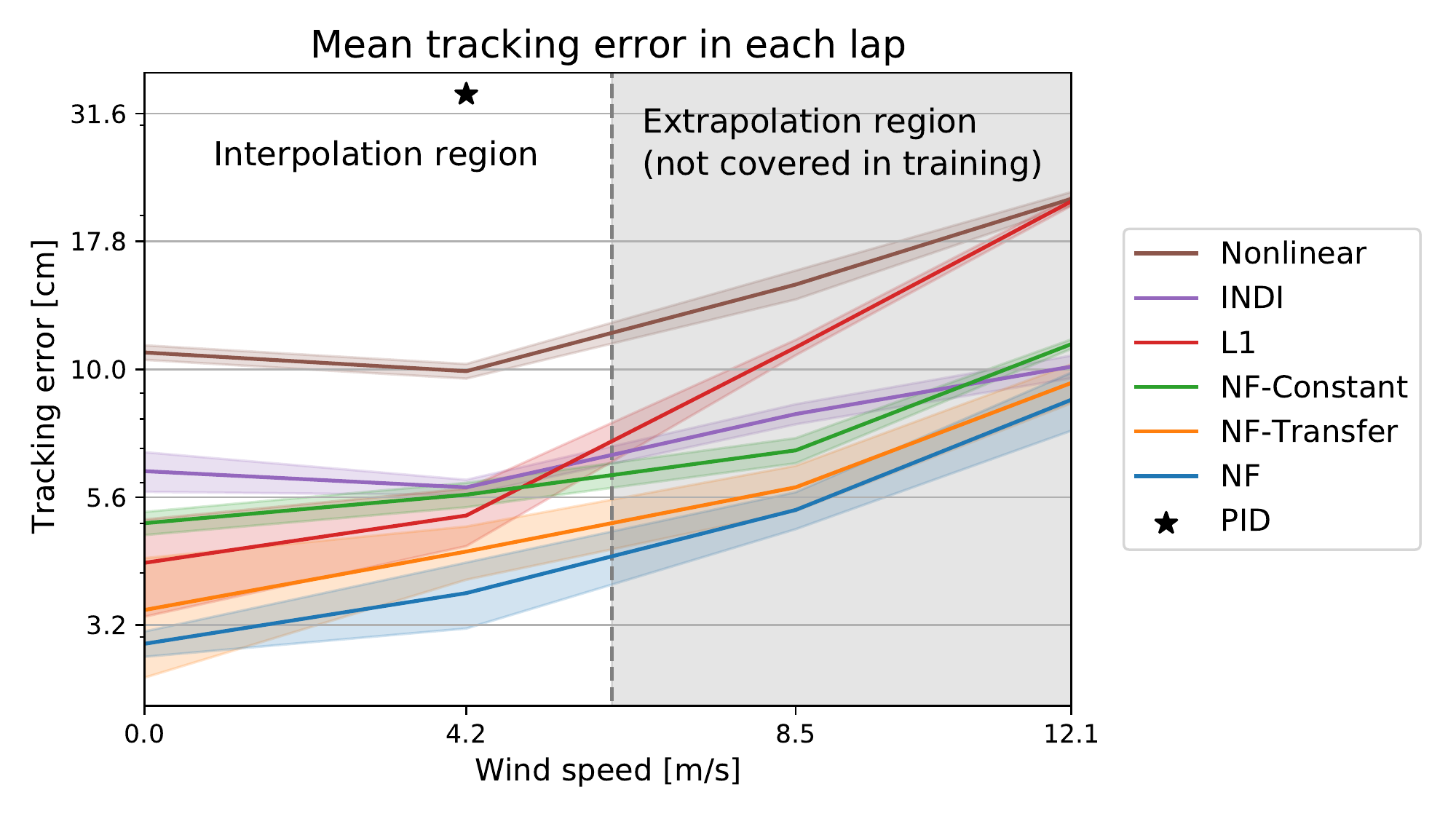}
    \caption{\revision{\textbf{Mean tracking errors of each lap in different wind conditions.} This figure shows position tracking errors of different methods as wind speed increases. Solid lines show the mean error over 6 laps and the shade areas show standard deviation of the mean error on each lap. The grey area indicates the extrapolation region, where the wind speeds are not covered in training. Our primary method (\nf) achieves state-of-the-art performance even with a strong wind disturbance.}}
    \label{fig:error-wind-plot}
\end{figure}

\begin{table}
\caption{\revision{Tracking error statistics in \SI{}{cm} for different wind conditions. Two metrics are considered: root-mean-square (RMS) and mean.}}
\label{tab:flight-performance}
\centering
\small
    \begin{tabular}{|c|cc|cc|cc|cc|cc|}
        \hline
        \multirow{3}{*}{\backslashbox{Method}{Wind speed\\ {[m/s]}}} & \multicolumn{2}{c|}{\multirow{2}{*}{0}}
        & \multicolumn{2}{c|}{\multirow{2}{*}{4.2}}
        & \multicolumn{2}{c|}{\multirow{2}{*}{8.5}}
        & \multicolumn{2}{c|}{\multirow{2}{*}{12.1}}
        & \multicolumn{2}{c|}{\multirow{2}{*}{$8.5+2.4\sin(t)$}} \\
        &&&&&&&&&&\\
        & RMS & Mean
        & RMS & Mean
        & RMS & Mean
        & RMS & Mean
        & RMS & Mean \\
        \hline
        
        \textbf{Nonlinear}
            &  11.9 &  10.8
            &  10.7 &  9.9
            &  16.3 &  14.7
            &  23.9 &  21.6
            &  31.2 &  28.2
        \\ \textbf{INDI}
            &  7.3 &  6.3
            &  6.4 &  5.9
            &  8.5 &  8.2
            &  10.7 &  10.1
            &  11.1 &  10.3
        \\ \textbf{L1}
            &  4.6 &  4.2
            &  5.8 &  5.2
            &  12.1 &  11.1
            &  22.7 &  21.3
            &  13.0 &  11.6
        \\ \textbf{NF-Constant}
            &  5.4 &  5.0
            &  6.1 &  5.7
            &  7.5 &  6.9
            &  12.7 &  11.2
            &  12.7 &  12.1
        \\ \textbf{NF-Transfer}
            &  3.7 &  3.4
            &  4.8 &  4.4
            &  6.2 &  5.9
            &  10.2 &  9.4
            &  8.8 &  8.0
        \\ \textbf{NF}
            &  \textbf{3.2} &  \textbf{2.9}
            &  \textbf{4.0} &  \textbf{3.7}
            &  \textbf{5.8} &  \textbf{5.3}
            &  \textbf{9.4} &  \textbf{8.7}
            &  \textbf{7.6} &  \textbf{6.9}
        \\ \hline
    \end{tabular}
\end{table}

\rev{\subsection*{Outdoor Experiments}
We tested our algorithm outdoors in gentle breeze conditions (wind speeds measured up to \SI{17}{km/h}). An onboard GPS receiver provided position information to the EKF, giving lower precision state estimation, and therefore less precise aerodynamic residual force estimation. Following the same aforementioned figure-8 trajectory, the controller reached \SI{7.5}{cm} mean tracking error, shown in \cref{fig:outdoor}.
}








\section{Discussion}
\label{sec:discussion}
\subsection*{State-of-the-art Tracking Performance}
When measuring position tracking errors, \rev{we observe that our \nf~method outperforms state-of-the-art flight controllers in all wind conditions}. 
\nf~uses deep learning to obtain a compact representation of the aerodynamic disturbances and incorporates \rev{that representation} into an adaptive control design to achieve high precision tracking performance. 
\rev{The benchmark methods used in this \revB{article} are nonlinear control, INDI, and $\LOne$ and performance is compared tracking an agile figure-8 in constant and time-varying wind speeds up to \highwind~(\highwindalt). 
Furthermore, we observe a mean tracking error of \SI{2.9}{cm} in \nowind~wind, which is comparable with state-of-the-art tracking performance demonstrated on more aggressive racing drones \cite{tal_accurate_2021,hanover_performance_2021} despite several architectural limitations such as limited control rate in offboard mode, a larger, less maneuverable vehicle, and without direct motor speed measuremnts. All our experiments were conducted using the standard PX4 attitude controller, with \nf~implemented in an onboard, low cost, and ``credit-card sized" Raspberry Pi 4 computer.}
Furthermore, \nf~is robust to changes in vehicle configuration, as demonstrated by the similar performance of \nftransfer. 

To understand the fundamental tracking-error limit, we estimate that the localization precision from the OptiTrack system is about \SI{1}{cm}, which is a practical lower bound for the average tracking error in our system (see more details in the supplementary material, \ref{sec:supp-localization-error}). This is based on the fact that the difference between the OptiTrack position measurement and the onboard EKF position estimate is around \SI{1}{cm}. 

To achieve a tracking error of \SI{1}{cm}, remaining improvements should focus on reducing code execution time, communication delays, and attitude tracking delay. We measured the combined code execution time and communication delay to be at least \SI{15}{ms} and often as much as \SI{30}{ms}. A faster implementation (such as using C++ instead of Python) and streamlined communication layer (such as using ROS2's real-time features) could allow us to achieve tracking errors on the order of the localization accuracy. Attitude tracking delay can be \revB{substantially} reduced through the use of a nonlinear attitude controller (e.g., \cite{shi2018nonlinear}). \rev{Our method is also directly extensible to attitude control because attitude dynamics match the Euler-Lagrange dynamics used in our derivations.} However, further work is needed to understand the interaction of the learned dynamics with the cascaded control design when implementing a tracking attitude controller. 

We have tested our control method in outdoor flight to demonstrate that it is robust to less precise state estimation and does not rely on any particular features of our test facility.
Although control and estimation are usually separately designed parts of an autonomous system, aggressive adaptive control requires minimal noise in force measurement to effectively and quickly compensate for unmodelled dynamics. Testing our method in outdoor flight, the quadrotor maintains precise tracking with only \SI{7.5}{cm} tracking error on a gentle breezy day with wind speeds around \SI{17}{km/h}.

\begin{figure}
    \centering
    \begin{subfigure}[c]{0.35\linewidth}
    \centering
    \includegraphics[width=\textwidth]{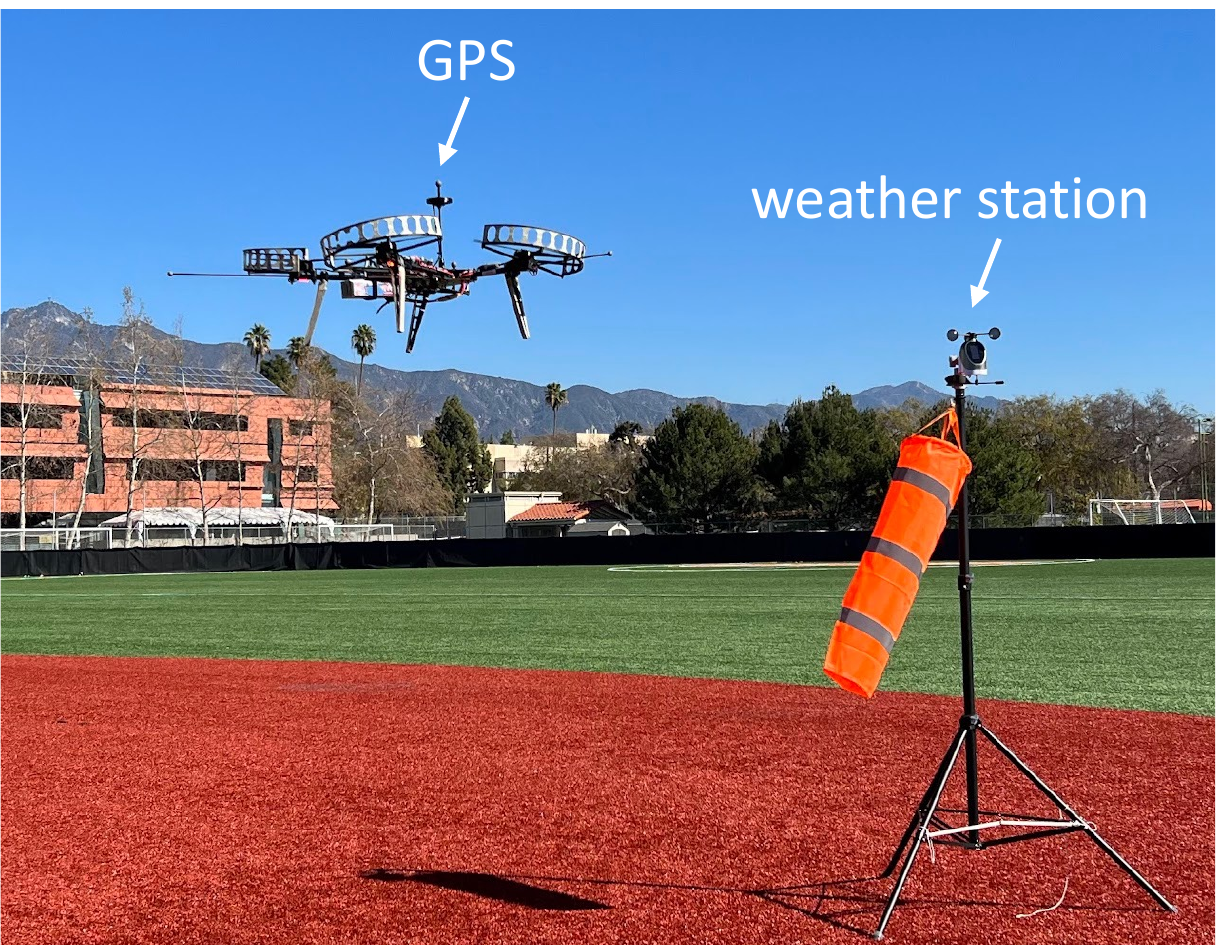}
    \end{subfigure}
    ~
    \begin{subfigure}[c]{0.62\linewidth}
    \centering
    \includegraphics[width=\textwidth]{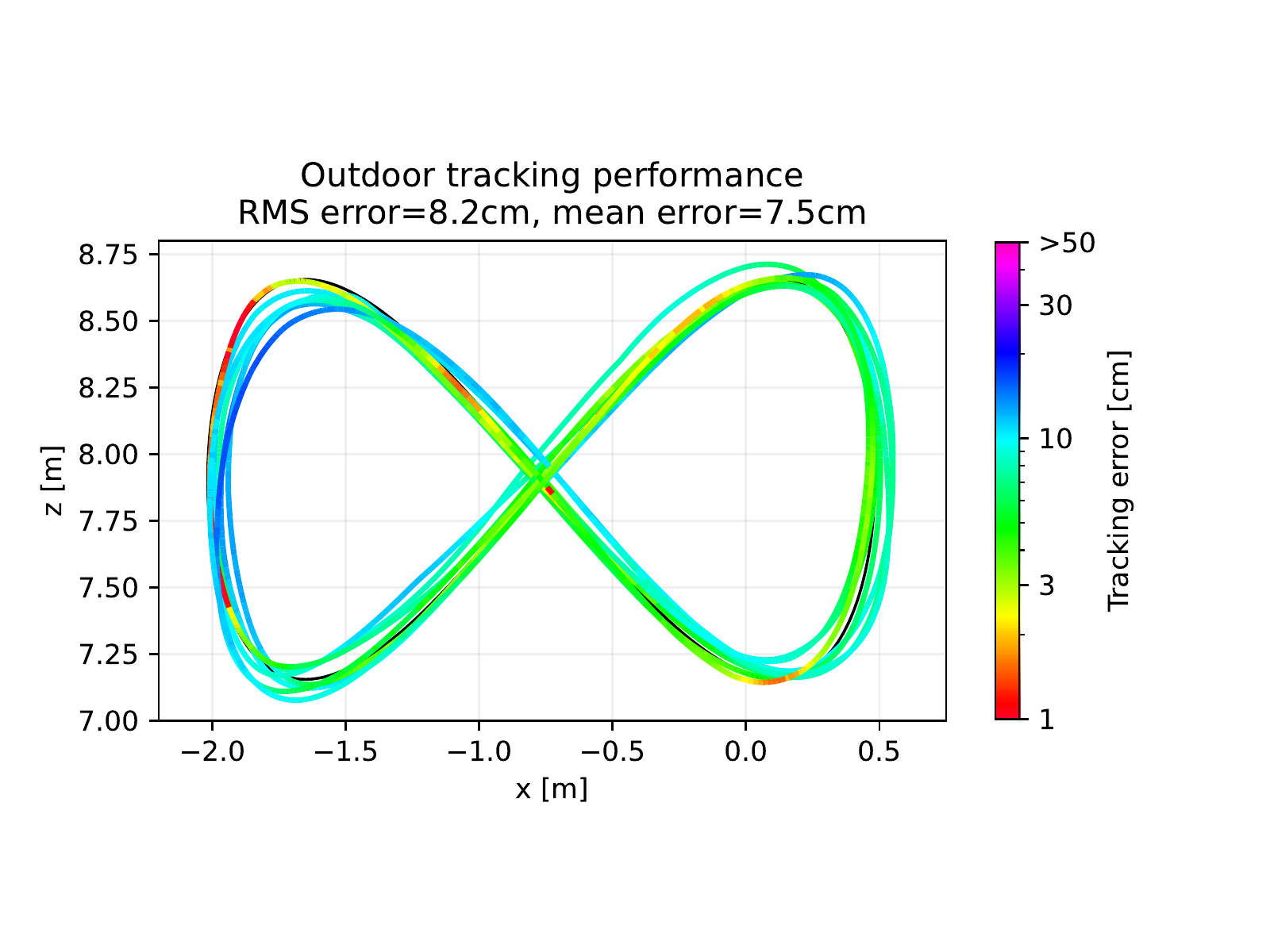}
    \end{subfigure}
    
    \vspace{-1.2cm}
    \caption{\revision{\textbf{Outdoor flight setup and performance.} \textbf{Left:} In outdoor experiments, a GPS module is deployed for state estimation, and a weather station records wind profiles. The maximum wind speed during the test was around \SI{17}{km/h} (\SI{4.9}{m/s}). \textbf{Right:} Trajectory tracking performance of \nf.}}
    \label{fig:outdoor}
\end{figure}

\subsection*{Challenges Caused by Unknown and Time-varying Wind Conditions}
In the real world, the wind conditions are not only unknown but also constantly changing, and the vehicle must continuously adapt. We designed the the sinusoidal wind test to emulate unsteady or gusty wind conditions.
Although our learned model is trained on static and approximately uniform wind condition data, \nf~can quickly identify changing wind speed and maintains precise tracking even \rev{on the sinusoidal wind experiment}.
\rev{Moreover,} in each of our experiments, the wind conditions were unknown to the UAV before starting the test yet were quickly identified by the adaptation algorithm. 

\revB{Our work demonstrated that it is possible to} repeatably and quantitatively test quadrotor flight in time-varying wind. Our method separately learns the wind effect's dependence on the vehicle state (i.e., the wind-invariant representation in \cref{fig:block-diagram}(A)) and the wind condition (i.e., the wind-specific linear weight in \cref{fig:block-diagram}(A)). This separation allows \nf~to quickly adapt to the time-varying wind even as the UAV follows a dynamic trajectory, with an average tracking error below \rev{\SI{8.7}{cm}} in \cref{tab:flight-performance}.  

\subsection*{Computational Efficiency of Our Method}
In the offline meta-learning phase, the proposed \DAML~algorithm is able to learn an effective representation of the aerodynamic effect in a data efficient manner. This requires only 12 minutes of flight data at \SI{50}{Hz}, for a total of 36,000 data points. The training procedure only takes 5 minutes on a normal desktop computer. In the online adaptation phase, our adaptive control method only takes \SI{10}{ms} to compute on a compact onboard Linux computer (Raspberry Pi 4). In particular, the feedforward inference time via the learned basis function is about \SI{3.5}{ms} and the adaptation update is about \SI{3.0}{ms}, which implies the compactness of the learned representation. 

\subsection*{Generalization to New Trajectories and New Aircraft}
It is worth noting that our control method is orthogonal to the design of the desired trajectory. In this \revB{article}, we \rev{focus on the figure-8 trajectory which is a commonly used control benchmark. We also demonstrate our method flying a horizontal ellipse during the narrow gate demonstration \cref{fig:intro}.} Note that our method supports any trajectory planners such as \cite{foehn2021time} or learning-based planners \cite{nakka2020chance,doi:10.1126/scirobotics.abg5810,liu2020robust}. In particular, for those planners which require a precise and agile downstream controller (e.g., for close-proximity flight or drone racing \cite{foehn2021time,shi2021neural}), our method immediately provides a solution and further pushes the boundary of these planners, because our state-of-the-art tracking capabilities enable tighter configurations and smaller clearances. However, further research is required to understand the coupling between planning and learning-based control near actuation limits. \rev{Future work will consider using \nf~in a combined planning and control structure such as MPC, which will be able to handle actuation limits.}

The comparison between \nf~and \nftransfer~show that our approach is robust to changing vehicle design and the learned representation \revB{does not depend on the} vehicle. This demonstrates the generalizability of the proposed method running on different quadrotors. Moreover, our control algorithm is formulated generally for all robotic systems described by the Euler-Langrange equation (see ``\nameref{sec:methods}''), including many types of aircraft such as \cite{shi_adaptive_2020,doi:10.1126/scirobotics.abf8136}. 

%
%

\section{Materials and Methods}
\label{sec:methods} 

\subsection*{Overview}
We consider a general robot dynamics model:
\begin{equation}
    \label{eq:open-loop-dynamics}
    \begin{aligned}
    M(q)\ddot q + C(q,\dot q) \dot q + g(q) &= u + f(q, \dot{q}, w) 
    \end{aligned}
\end{equation}
where $q, \dot q, \ddot q \in \mathbb{R}^n$ are the $n$ dimensional position, velocity, and acceleration vectors, $M(q)$ is the symmetric, positive definite inertia matrix, $C(q,\dot q)$ is the Coriolis matrix, $g(q)$ is the gravitational force vector and $u\in\mathbb{R}^n$ is the control force. Most importantly, $f(q,\dot q, w)$ incorporates unmodeled dynamics, and $w \in \mathbb{R}^m$ is an unknown hidden state used to represent the underlying environmental conditions, which is potentially time-variant. Specifically, in this \revB{article}, $w$ represents the wind profile (for example, the wind profile in \cref{fig:intro}), and different wind profiles yield different unmodeled aerodynamic disturbances for the UAV.

\nf~can be broken into two main stages, the offline meta-learning stage and the online adaptive control stage. These two stages build a model of the unknown dynamics of the form 
\begin{align}
    f(q, \dot q, w) &\approx \phi(q, \dot q) a(w),
\end{align}
where $\phi$ is a basis or representation function shared by all wind conditions and captures the dependence of the unmodeled dynamics on the robot state, and $a$ is a set of linear coefficients that is updated for each condition. In the supplementary material (\ref{sec:supp-learning-expressiveness}), we prove that the decomposition $\phi(q,\dot{q})a(w)$ exists for any analytic function $f(q,\dot{q},w)$.
In the offline meta-learning stage, we learn $\phi$ as a DNN using our meta-learning algorithm \DAML. This stage results in learning $\phi$ as a wind-invariant representation of the unmodeled dynamics, which generalizes to new trajectories and new wind conditions.
In the online adaptive control stage, we adapt the linear coefficients $a$ using adaptive control. Our adaptive control algorithm is a type of composite adaptation and was carefully designed to allow for fast adaptation while maintaining the global exponential stability and robustness of the closed loop system. The offline learning and online control architectures are illustrated in \cref{fig:block-diagram}(B) and \cref{fig:block-diagram}(A,C), respectively.

\subsection*{Data Collection}
To generate training data to learn a wind-invariant representation of the unmodeled dynamics, the drone tracks a randomized trajectory with the baseline nonlinear controller for 2 minutes each in several different static wind conditions. \Cref{fig:training-data}(A) illustrates one trajectory under the wind condition \SI{13.3}{km/h}~(\SI{3.7}{m/s}). The set of input-output pairs for the \kth such trajectory is referred as the \kth subdataset, $D_{w_k}$, with the wind condition $w_k$. Our dataset consists of 6 different subdatasets with wind speeds from \SI{0}{km/h} to \SI{22.0}{km/h}~(\SI{6.1}{m/s}), which are in the white interpolation region in \cref{fig:error-wind-plot}. 

The trajectory follows a polynomial spline between 3 waypoints: the current position and two randomly generated target positions. The spline is constrained to have zero velocity, acceleration, and jerk at the starting and ending waypoints. Once the end of one spline is reached, the a new random spline is generated and the process repeats for the duration of the training data flight. This process allows us to generate a large amount of data using a trajectory very different from the trajectories used to test our method, such as the figure-8 in \cref{fig:intro}. By training and testing on different trajectories, we demonstrate that the learned model generalizes well to new trajectories. 

Along each trajectory, we collect time-stamped data $[q,\dot{q},u]$.  Next, we compute the acceleration $\ddot{q}$ by fifth-order numerical differentiation.  Combining this acceleration with \cref{eq:open-loop-dynamics}, we get a noisy measurement of the unmodeled dynamics, $y = f(x, w) + \epsilon$, where $\epsilon$ includes all sources of noise (e.g., sensor noise and noise from numerical differentiation) and $x = [q; \dot{q}] \in \mathbb{R}^{2n}$ is the state. Finally, this allows us to define the dataset, $\mathcal{D}=\{D_{w_1},\cdots,D_{w_K}\}$, where
\begin{equation}
D_{w_k}=\left\{x_k^{(i)}, y_k^{(i)}=f(x_k^{(i)},w_k)+\epsilon_k^{(i)} \right\}_{i=1}^{N_k}   
\end{equation}
is the collection of $N_k$ noisy input-output pairs with wind condition $w_k$. As we discuss in the ``\nameref{sec:results}'' section, in order to show \DAML~
\rev{learns a model which can be transferred between drones, }
we applied this data collection process on both the custom built drone and the Intel Aero RTF drone.

\subsection*{The \LearningAlg~(\DAML) Algorithm}
In this section, we will present the methodology and details of learning the representation function $\phi$. In particular, we will first introduce the goal of meta-learning, motivate the proposed algorithm \DAML~by the observed domain shift problem from the collected dataset, and finally discuss key algorithmic details.

\paragraph{Meta-Learning Goal} Given the dataset, the goal of meta-learning is to learn a representation $\phi(x)$, such that for any wind condition $w$, there exists a latent variable $a(w)$ which allows $\phi(x)a(w)$ to approximate $f(x,w)$ well. Formally, an optimal representation, $\phi$, solves the following optimization problem:
\begin{equation}
\label{eq:optimization-f-loss}
    \min_{\phi,a_1,\cdots,a_K}\sum_{k=1}^K\sum_{i=1}^{N_k}
    \left\| y_k^{(i)} - \phi(x_k^{(i)})a_k \right\|^2,
\end{equation}
where $\phi(\cdot):\B{R}^{2n}\rightarrow\B{R}^{n\times h}$ is the representation function and $a_k\in\B{R}^h$ is the latent linear coefficient. Note that the optimal weight $a_k$ is specific to each wind condition, but the optimal representation $\phi$ is shared by all wind conditions. In this \revB{article}, we use a deep neural network (DNN) to represent $\phi$. In the supplementary material (Section S2), we prove that for any analytic function $f(x,w)$, the structure $\phi(x)a(w)$ can approximate $f(x,w)$ with an arbitrary precision, as long as the DNN $\phi$ has enough neurons. This result implies that the $\phi$ solved from the optimization in \cref{eq:optimization-f-loss} is a reasonable representation of the unknown dynamics $f(x,w)$. 

\paragraph{Domain Shift Problems} One challenge of the optimization in \cref{eq:optimization-f-loss} is the \emph{inherent domain shift} in $x$ caused by the shift in $w$. Recall that during data collection we have a program flying the drone in different winds. The actual flight trajectories differ vastly from wind to wind because of the wind effect.
Formally, the distribution of $x_k^{(i)}$ varies between $k$ because the underlying environment or context $w$ has changed. For example, as depicted by \cref{fig:training-data}(C), the drone pitches into the wind, and the average degree of pitch depends on the wind condition. Note that pitch is only one component of the state $x$. The domain shift in the whole state $x$ is even more drastic.

Such inherent shifts in $x$ bring challenges for deep learning. The DNN $\phi$ may memorize the distributions of $x$ in different wind conditions, such that the variation in the dynamics $\{ f(x,w_1), f(x,w_2), \cdots, f(x,w_K)\}$ is reflected via the distribution of $x$, rather than the wind condition $\{w_1, w_2, \cdots, w_K\}$. In other words, the optimization in \cref{eq:optimization-f-loss} may lead to \emph{over-fitting} and may not properly find a wind-invariant representation $\phi$.

To solve the domain shift problem, inspired by \cite{ganin2016domain}, we propose the following adversarial optimization framework:
\begin{equation}
\label{eq:optimization-both-loss}
\begin{aligned}
    \max_{h} \min_{\phi,a_1,\cdots,a_K}
    \sum_{k=1}^K\sum_{i=1}^{N_k}
    \left(
    \left\| y_k^{(i)} - \phi(x_k^{(i)})a_k \right\|^2 - \alpha\cdot\mathrm{loss}\left( h( \phi(x_k^{(i)}) ), k \right)
    \right),
\end{aligned}
\end{equation}
where $h$ \rev{is another DNN that works as a discriminator to} predict the environment index out of $K$ wind conditions, $\mathrm{loss}(\cdot)$ is a classification loss function (e.g., the cross entropy loss), $\alpha\geq0$ is a hyperparameter to control the degree of regularization, $k$ is the wind condition index, and $(i)$ is the input-output pair index. Intuitively, $h$ and $\phi$ play a zero-sum max-min game: the goal of $h$ is to predict the index $k$ directly from $\phi(x)$ (achieved by the outer $\max$); the goal of $\phi$ is to approximate the label $y_k^{(i)}$ while making the job of $h$ harder (achieved by the inner $\min$). In other words, $h$ is a learned regularizer to remove the environment information contained in $\phi$. In our experiments, the output of $h$ is a $K-$dimensional vector for the classification probabilities of $K$ conditions, and we use the cross entropy loss for $\mathrm{loss}(\cdot)$, which is given as
\begin{align}
    \mathrm{loss}\left( h( \phi(x_k^{(i)}) ), k \right) = - \sum_{j=1}^K \delta_{k j} \log\left( h( \phi(x_k^{(i)}) )^\top e_j \right)
\end{align}
where $\delta_{kj}=1$ if $k=j$ and $\delta_{kj}=0$ otherwise and $e_j$ is the standard basis function.

\begin{algorithm}[t]
    \caption{\LearningAlg~(\DAML)}
    \label{alg:DIML}
    \DontPrintSemicolon
    \SetKwInput{KwSolver}{Solver}
    \SetKwInput{KwObserve}{Observe}
    \SetKwInput{KwPara}{Hyperparameter}
    \SetKwInput{KwResult}{Result}
    \SetKwInput{KwInitialize}{Initialize}
    \SetKwComment{CommentT}{$\triangleright$\ }{}
    \SetKwRepeat{RepeatConvergence}{repeat}{until convergence}
    \SetKwFunction{Rand}{rand}
    
    \KwPara{$\alpha\geq0$, $0<\eta\leq1$, $\gamma>0$}
    \KwIn{$\mathcal{D}=\{D_{w_1},\cdots,D_{w_K}\}$}
    \KwInitialize{Neural networks $\phi$ and $h$}
    \KwResult{Trained neural networks $\phi$ and $h$}
    
    \RepeatConvergence{}{
        Randomly sample $D_{w_k}$ from $\mathcal{D}$ \\
        Randomly sample two disjoint batches $B^a$ (adaptation set) and $B$ (training set) from $D_{w_k}$ \\
        Solve the least squares problem $a^*(\phi)=\arg\min_a \sum_{i\in B^a} \left\| y_k^{(i)} - \phi(x_k^{(i)}) a \right\|^2$ \label{alg:DIML:LS} \\
        \If{$\|a^*\|>\gamma$}{
            $a^*\leftarrow \gamma\cdot\frac{a^*}{\|a^*\|}\quad$ \label{alg:DIML:normalization} \CommentT{normalization}
        }
        Train DNN $\phi$ using stochastic gradient descent (SGD) and spectral normalization with loss 
        \begin{equation*}
        \begin{aligned}
            \sum_{i\in B} \left( \left\| y_k^{(i)} - \phi(x_k^{(i)})a^*\right\|^2 
            - \alpha\cdot\mathrm{loss}\left( h(\phi(x_k^{(i)})), k \right) \right)
        \end{aligned}
        \end{equation*}\label{alg:DIML:learning_phi}\\
        \If{\Rand{} $\leq \eta$}{
           Train DNN $h$ using SGD with loss $\sum_{i\in B}\mathrm{loss}\left( h(\phi(x_k^{(i)})), k \right)$ \label{alg:DIML:learning_h}
        }
    }
\end{algorithm}

\paragraph{Design of the \DAML~Algorithm} Finally, we solve the optimization problem in \cref{eq:optimization-both-loss} by the proposed algorithm \DAML~(described in Algorithm \ref{alg:DIML} and illustrated in \cref{fig:block-diagram}(B)), which belongs to the category of gradient-based meta-learning \cite{meta-learning-survey}, but with least squares as the adaptation step. \DAML~contains three steps: (i) The adaptation step (Line \ref{alg:DIML:LS}-6) solves an least squares problem as a function of $\phi$ on the adaptation set $B^a$. (ii) The training step (Line \ref{alg:DIML:learning_phi}) updates the learned representation $\phi$ on the training set $B$, based on the optimal linear coefficient $a^*$ solved from the adaptation step. (iii) The regularization step (Line 8-\ref{alg:DIML:learning_h}) updates the discriminator $h$ on the training set.

We emphasize important features of \DAML:
(i) After the adaptation step, $a^*$ is a function of $\phi$. In other words, in the training step (Line \ref{alg:DIML:learning_phi}), the gradient with respect to the parameters in the neural network $\phi$ will backpropagate through $a^*$. Note that the least-square problem (Line \ref{alg:DIML:LS}) can be solved efficiently with a closed-form solution.
(ii) The normalization (Line \ref{alg:DIML:normalization}) is to make sure $\|a^*\|\leq\gamma$, which improves the robustness of our adaptive control design. We also use spectral normalization in training $\phi$, to control the Lipschitz property of the neural network and improve generalizability~\cite{shi2019neural,shi2021neural,bartlett2017spectrally}. 
(iii) We train $h$ and $\phi$ in an alternating manner. In each iteration, we first update $\phi$ (Line \ref{alg:DIML:learning_phi}) while fixing $h$ and then update $h$ (Line \ref{alg:DIML:learning_h}) while fixing $\phi$. However, the probability to update the discriminator $h$ in each iteration is $\eta\leq1$ instead of $1$, to improve the convergence of the algorithm \cite{goodfellow2014generative}. 

We further motivate the algorithm design using \cref{fig:training-data} and \cref{fig:training-tsne}. \Cref{fig:training-data}(A,B) shows the input and label from one wind condition, and \cref{fig:training-data}(C) shows the distributions of the pitch component in input and the $x-$component in label, in different wind conditions. The distribution shift in label implies the importance of meta-learning and adaptive control, because the aerodynamic effect changes drastically as the wind condition switches. On the other hand, the distribution shift in input motivates the need of \DAML. \Cref{fig:training-tsne} depicts the evolution of the optimal linear coefficient ($a^*$) solved from the adaptation step in \DAML, via the t-distributed stochastic neighbor embedding (t-SNE) dimension reduction, which projects the 12-dimensional vector $a^*$ into 2-d. The distribution of $a^*$ is more and more clustered as the number of training epochs increases. In addition, the clustering behavior in \cref{fig:training-tsne} has a concrete physical meaning: right top part of the t-SNE plot corresponds to a higher wind speed. These properties imply the learned representation $\phi$ is indeed shared by all wind conditions, and the linear weight $a$ contains the wind-specific information. Finally, note that $\phi$ with 0 training epoch reflects random features, which cannot decouple different wind conditions as cleanly as the trained representation $\phi$. Similarly, as shown in \cref{fig:training-tsne}, if we ignore the adversarial regularization term (by setting $\alpha=0$), different $a^*$ vectors in different conditions are less disentangled, which indicates that the learned representation might be less robust and explainable. 
For more discussions about $\alpha$ we refer to the supplementary materials (\ref{sec:supp-learning-hyperparameters}).

\subsection*{Robust Adaptive Controller Design}
\label{sec:control}
During the offline meta-training process, a least-squares fit is used to to find a set of parameters $a$ that minimizes the force prediction error for each data batch. 
However, during the online control phase, we are ultimately interested in minimizing the position tracking error and we can improve the adaptation using a more sophisticated update law.
Thus, in this section, we propose a more sophisticated adaptation law for the linear coefficients based upon a Kalman-filter estimator. This formulation results in automatic gain tuning for the update law, which allows the controller to quickly estimate parameters with large uncertainty. We further boost this estimator into a composite adaptation law, that is the parameter update depends both on the prediction error in the dynamics model as well as on the tracking error, as illustrated in \cref{fig:block-diagram}. This allows the system to quickly identify and adapt to new wind conditions without requiring persistent excitation. 
In turn, this enables online adaptation of the high dimensional learned models from \DAML.

Our online adaptive control algorithm can be summarized by the following control law, adaptation law, and covariance update equations, respectively.
\begin{align}
    \label{eq:control-law-our}
    \unf &= \underbrace{M(q) \ddot{q_r} + C(q, \dot q) \dot q_r + g(q)}_{\text{nominal model feedforward terms}} \underbrace{- K s}_{\text{PD feedback}} \underbrace{- \phi(q, \dot q) \hat a}_{\text{learning-based feedforward}}
    \\
    \label{eq:adaptation-law-a}
    \dot{\hat{a}} &= \underbrace{-\lambda \hat{a}}_{\text{regularization term}} \underbrace{- P \phi^\top R^{-1} (\phi \hat{a} - y)}_{\text{prediction error term}} \underbrace{+P \phi^\top s}_{\text{tracking error term}} \\
    \label{eq:adaptation-law-P}
    \dot{P} &= - 2 \lambda P + Q - P \phi ^\top R^{-1} \phi P
\end{align}
where $\unf$ is the control law, $\dot{\hat{a}}$ is the online linear-parameter update, $P$ is a covariance-like matrix used for automatic gain tuning, $s=\dot{\tilde{q}}+\Lambda\tilde{q}$ is the composite tracking error, \rev{$y$ is the measured aerodynamic residual force with measurement noise $\epsilon$}, and $K$, $\Lambda$, $R$, $Q$, and $\lambda$ are gains. The structure of this control law is illustrated in \cref{fig:block-diagram}. \Cref{fig:block-diagram} also shows further quadrotor specific details for the implementation of our method, including the kinematics block, where the desired thrust and attitude are determined from the desired force from \cref{eq:control-law-our}. These blocks are discussed further in the ``\nameref{sec:implementation}" section.

In the next section, we will first introduce the baseline control laws, $\bar{u}$ and $u_\text{NL}$.  Then we discuss our control law $\unf$ in detail. Note that $\unf$ not only depends on the desired trajectory, but also requires the learned representation $\phi$ and the linear parameter $\hat{a}$ (an estimation of $a$). The composite adaptation algorithm for $\hat{a}$ is discussed in the following section.

In terms of theoretical guarantees, the control law and adaptation law have been designed so that the closed-loop behavior of the system is robust to imperfect learning and time-varying wind conditions. Specifically, we define $d(t)$ as the representation error: $f = \phi \cdot a + d(t)$, and our theory shows that the robot tracking error exponentially converges to an error ball whose size is proportional to $\|d(t)\rev{+\epsilon}\|$ (i.e., the learning error and measurement noise) and $\|\dot{a}\|$ (i.e., how fast the wind condition changes). Later in this section we formalize these claims with the main stability theorem and present a complete proof in the supplementary materials.  

\paragraph{Nonlinear Control Law}
We start by defining some notation. The composite velocity tracking error term $s$ and the reference velocity $\dot q_r$ are defined such that 
\begin{align}
    \label{eq:s-dynamics}
    s = \dot q - \dot q_r = \dot{\tilde{q}} + \Lambda \tilde{q}
\end{align}
where $\tilde{q} = q - q_d$ is the position tracking error and $\Lambda$ is a positive definite gain matrix. Note when $s$ exponentially converges to an error ball around $0$, $q$ will exponentially converge to a proportionate error ball around the desired trajectory $q_d(t)$ (see \ref{sec:supp-stability}). Formulating our control law in terms of the composite velocity error $s$ simplifies the analysis and gain tuning without loss of rigor. 

The baseline nonlinear (NL) control law using PID feedback is defined as
\begin{equation}
    \label{eq:control-law-NL}
    u_\text{NL} = \underbrace{M(q) \ddot{q_r} + C(q, \dot q) \dot q_r + g(q)}_{\text{nonlinear feedforward terms}} \underbrace{- K s - K_I\int s dt}_{\text{PID feedback}}.
\end{equation}
where $K$ and $K_I$ are positive definite control gain matrices. Note a \rev{standard PID controller typically only includes the PI feedback on position error, D feedback on velocity, and gravity compensation. This only leads to} local exponential stability about a fixed point, but it is often sufficient for gentle tasks such as a UAV hovering and slow trajectories in static wind conditions. \rev{In contrast, this nonlinear controller includes feedback on velocity error and feedforward terms to account for known dynamics and desired acceleration, which allows good tracking of dynamic trajectories in the presence of nonlinearities (e.g., $M(q)$ and $C(q,\dot q)$ are nonconstant in attitude control)}. However, this control law only compensates for changing wind conditions and unmodeled dynamics through an integral term, which is slow to react \rev{to changes in the unmodelled dynamics and disturbance forces.}

Our method improves the controller by predicting the unmodeled dynamics and disturbance forces, and, indeed, in \cref{tab:flight-performance} we see a substantial improvement gained by using our learning method. Given the learned representation of the residual dynamics, $\phi(q, \dot q)$, and the parameter estimate $\hat a$, we replace the integral term with the learned force term, $\hat{f}=\phi \hat{a}$, resulting in our control law in \cref{eq:control-law-our}. 
\rev{\nf~uses $\phi$ trained using \DAML~on a dataset collected with the same drone. \nftransfer~uses $\phi$ trained using \DAML~on a dataset collected with a different drone, the Intel Aero RTF drone. \nfconstant~does not use any learning but instead uses $\phi=I$ and is included to demonstrate that the main advantage of our method comes from the incorporation of learning. The learning based methods, \nf~and \nftransfer, outperform \nfconstant~because the compact learned representation can effectively and quickly predict the aerodynamic disturbances online in \cref{fig:trajectory}. This comparison is further discussed in the supplementary materials (\ref{supp:prediction-performance}).}

\paragraph{Composite Adaptation Law}

We define an adaptation law that combines a tracking error update term, a prediction error update term, and a regularization term in \cref{eq:adaptation-law-a,eq:adaptation-law-P},
where $y$ is a noisy measurement of $f$, $\lambda$ is a damping gain, $P$ is a covariance matrix which evolves according to \cref{eq:adaptation-law-P}, and $Q$ and $R$ are two positive definite gain matrices. Some readers may note that the regularization term, prediction error term, and covariance update, when taken alone, are in the form of a Kalman-Bucy filter. This Kalman-Bucy filter can be derived as the optimal estimator that minimizes the variance of the parameter error \cite{kalman_new_1961}.
The Kalman-Bucy filter perspective provides intuition for tuning the adaptive controller: the damping gain $\lambda$ corresponds to how quickly the environment returns to the nominal conditions, $Q$ corresponds to how quickly the environment changes, and $R$ corresponds to the combined representation error $d$ and measurement noise for $y$. More discussion on the gain tuning process is included in \ref{sec:supp-gain-tuning}. However, naively combining this parameter estimator with the controller can lead to instabilities in the closed-loop system behavior unless extra care is taken in constraining the learned model and tuning the gains. Thus, we have designed our adaptation law to include a tracking error term, making \cref{eq:adaptation-law-a} a composite adaptation law, guaranteeing stability of the closed-loop system (see \Cref{thm:stability-proof}), and in turn simplifying the gain tuning process. The regularization term allows the stability result to be independent of the persistent excitation of the learned model $\phi$, which is particularly relevant when using high-dimensional learned representations. The adaptation gain and covariance matrix, $P$, acts as automatic gain tuning for the adaptive controller, which allows the controller to quickly adapt to when a new mode in the learned model is excited. 

\paragraph{Stability and Robustness Guarantees} 
First we formally define the representation error $d(t)$, as the difference between the unknown dynamics $f(q,\dot{q},w)$ and the best linear weight vector $a$ given the learned representation $\phi(q,\dot{q})$, namely, $d(t) = f(q,\dot{q},w) - \phi(q,\dot{q})a(w)$. The measurement noise for the measured residual force is a bounded function $\epsilon(t)$ such that $y(t) = f(t) + \epsilon(t)$. If the environment conditions are changing, we consider the case that $\dot{a}\neq 0$. This leads to the following stability theorem.
\begin{theorem}
\label{thm:stability-proof}
    If we assume that the desired trajectory has bounded derivatives and 
    the system evolves according to the dynamics in \cref{eq:open-loop-dynamics}, the control law \cref{eq:control-law-our}, and the adaptation law \cref{eq:adaptation-law-a,eq:adaptation-law-P}, then the position tracking error exponentially converges to the ball
    \begin{equation}
        \label{eq:exponential-bound}
        \lim_{t\rightarrow\infty}\|\tilde{q}\| \leq \sup_t \big[ C_1\|d(t)\| + C_2\|\rev{\epsilon(t)}\| +  C_3\left( \lambda\|a(t)\| + \|\dot{a}(t)\|\right) \big],
    \end{equation} 
    where $C_1$, $C_2$, and $C_3$ are three bounded constants depending on $\phi, R, Q, K,\Lambda, M$ and $\lambda$.
\end{theorem}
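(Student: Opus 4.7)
I would begin by introducing the parameter estimation error $\tilde a(t) = \hat a(t) - a(t)$ and recall the composite velocity error $s = \dot q - \dot q_r$. Substituting the control law \cref{eq:control-law-our} into the dynamics \cref{eq:open-loop-dynamics} and using the representation-error decomposition $f = \phi(q,\dot q) a(w) + d(t) + \epsilon(t)$ (on measured quantities, with $y = \phi a + d + \epsilon$) yields the filtered-error dynamics
\begin{equation*}
M(q)\dot s = -C(q,\dot q)\, s - K s - \phi(q,\dot q)\,\tilde a + d(t).
\end{equation*}
Likewise, plugging $y$ into the adaptation law \cref{eq:adaptation-law-a} and writing $\hat a = \tilde a + a$ gives
\begin{equation*}
\dot{\tilde a} = -\lambda(\tilde a + a) - P \phi^\top R^{-1} \phi\, \tilde a + P\phi^\top R^{-1}(d+\epsilon) + P\phi^\top s - \dot a.
\end{equation*}
I would use throughout the standard Lagrangian skew-symmetry $\dot M - 2C = -(\dot M - 2C)^\top$ to eliminate quadratic $C$-terms.

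\textbf{Lyapunov candidate and the key cancellation.} My candidate is
\begin{equation*}
\mathcal{V}(t) \;=\; \tfrac{1}{2}\, s^\top M(q)\, s \;+\; \tfrac{1}{2}\, \tilde a^\top P^{-1}\, \tilde a,
\end{equation*}
which presupposes uniform positive-definiteness of $P$. Differentiating, the first term contributes $-s^\top K s - s^\top \phi\,\tilde a + s^\top d$ after skew-symmetry kills $s^\top C s$. For the second term, I use the Riccati identity $\tfrac{d}{dt}P^{-1} = 2\lambda P^{-1} - P^{-1} Q P^{-1} + \phi^\top R^{-1}\phi$ obtained from \cref{eq:adaptation-law-P}. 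The structural observation driving the whole proof is that the composite \emph{tracking-error} term $+P\phi^\top s$ in \cref{eq:adaptation-law-a} produces $+\tilde a^\top \phi^\top s$, which exactly cancels the $-s^\top\phi\,\tilde a$ from the $s$-dynamics; this is what lets high-dimensional learned bases couple stably with the controller without persistent excitation. After this cancellation and some bookkeeping,
\begin{equation*}
\dot{\mathcal V} = -s^\top K s + s^\top d - \tfrac{1}{2}\tilde a^\top \phi^\top R^{-1}\phi\,\tilde a + \tilde a^\top \phi^\top R^{-1}(d+\epsilon) - \tfrac{1}{2}\tilde a^\top P^{-1} Q P^{-1}\tilde a - \lambda\tilde a^\top P^{-1} a - \tilde a^\top P^{-1}\dot a.
\end{equation*}

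\textbf{Covariance bounds, contraction inequality, and the main obstacle.} To turn this into an exponentially contracting inequality, I need uniform bounds $p_{\min} I \preceq P(t) \preceq p_{\max} I$ with $p_{\min} > 0$. The upper bound follows by dominating $\dot P \preceq -2\lambda P + Q$ and applying a scalar comparison to $\lambda_{\max}(P)$; the lower bound uses the positive forcing $Q \succ 0$ together with a uniform a-priori bound $\|\phi(q,\dot q)\| \le L_\phi$, which is guaranteed by the spectral-normalized training of $\phi$ in Algorithm \ref{alg:DIML} and the assumption of bounded desired derivatives (closed via a standard bootstrap on $\|s\|$). With $P$ pinched, Young's inequality absorbs each indefinite cross term into the dissipation of $-s^\top K s$ and $-\tfrac{1}{2}\tilde a^\top P^{-1} Q P^{-1}\tilde a$, giving
\begin{equation*}
\dot{\mathcal V} \;\le\; -\alpha\, \mathcal V \;+\; \beta_1\|d(t)\|^2 + \beta_2\|\epsilon(t)\|^2 + \beta_3\bigl(\lambda\|a(t)\| + \|\dot a(t)\|\bigr)^2,
\end{equation*}
for positive constants $\alpha, \beta_i$ depending only on $K, \Lambda, M, \phi, R, Q, \lambda, p_{\min}, p_{\max}$. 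The comparison lemma then yields exponential convergence of $\mathcal V$ to a ball; $\mathcal V \ge \tfrac{1}{2}\lambda_{\min}(M)\|s\|^2$ transfers this to $\|s\|$, and the stable first-order filter $s = \dot{\tilde q} + \Lambda \tilde q$ propagates it to $\|\tilde q\|$, producing \cref{eq:exponential-bound} with constants $C_1, C_2, C_3$ identifiable from the Young weights. I expect the hardest step to be the Riccati lower bound on $P$, because the excitation $\phi^\top R^{-1}\phi$ is trajectory-dependent and generally rank-deficient instantaneously; avoiding a persistence-of-excitation assumption there is what forces the leak term $-2\lambda P$ in \cref{eq:adaptation-law-P} and the regularizer $-\lambda\hat a$ in \cref{eq:adaptation-law-a}, and these in turn are the source of the $\lambda\|a(t)\|$ contribution to the final error ball.
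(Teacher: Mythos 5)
Your proposal is correct and follows essentially the same route as the paper's supplementary proof: the same stacked Lyapunov function $s^\top M s + \tilde{a}^\top P^{-1}\tilde{a}$, the same cancellation of the $\phi\tilde{a}$ coupling by the tracking-error term of the composite adaptation law, the same Riccati identity for $\tfrac{d}{dt}(P^{-1})$, the same reliance on uniform positive definiteness and boundedness of $P$ (which the paper obtains from Dieci--Eirola under $Q\succ 0$ and bounded $\phi$, exactly the step you flag as hardest), and the same first-order-filter argument to pass from $s$ to $\tilde{q}$. The only cosmetic difference is that you absorb the cross terms via Young's inequality to get a contraction on $\mathcal{V}$, whereas the paper substitutes $\mathcal{W}=\sqrt{\mathcal{V}}$ and applies the Comparison Lemma to keep the disturbance dependence linear throughout; both yield the stated bound.
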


\subsection*{Implementation Details}
\label{sec:implementation}

\paragraph{Quadrotor Dynamics}
Now we introduce the quadrotor dynamics. Consider states given by global position, $p \in \mathbb{R}^3$, velocity $v \in \mathbb{R}^3$, attitude rotation matrix $R \in \mathrm{SO}(3)$, and body angular velocity $\omega \in \mathbb{R}^3$. Then dynamics of a quadrotor are
\begin{subequations}
\begin{align}
\dot{p} &= v, &  
m\dot{v} &= mg + Rf_u + f,\label{eq:pos-dynamics} \\ 
\dot{R}&=RS(\omega), & 
J\dot{\omega} &= J \omega \times \omega  + \tau_u,
\label{eq:att-dynamics}
\end{align}
\label{eq:quadrotor-dynamics}
\end{subequations}
where $m$ is the mass, $J$ is the inertia matrix of the quadrotor, $S(\cdot)$ is the skew-symmetric mapping, $g$ is the gravity vector, $f_u = [0, 0, T]^\top$ and $\tau_u = [\tau_x, \tau_y, \tau_z]^\top$ are the total thrust and body torques from four rotors predicted by the nominal model, and $f = [f_x, f_y, f_z]^\top$ are forces resulting from unmodelled aerodynamic effects due to varying wind conditions.

We cast the position dynamics in \cref{eq:pos-dynamics} into the form of \cref{eq:open-loop-dynamics}, by taking $M(q) = m I$, $C(q, \dot q) \equiv 0$, and $u = R f_u$. Note that the quadrotor attitude dynamics \cref{eq:att-dynamics} is also a special case of \cref{eq:open-loop-dynamics} \cite{slotine1991applied,murray_mathematical_2017}, and thus our method can be extended to attitude control. We implement our method in the position control loop, that is we use our method to compute a desired force $u_d$.  Then the desired force is decomposed into the desired attitude $R_d$ and the desired thrust $T_d$ using kinematics (see \cref{fig:block-diagram}). Then the desired attitude and thrust are sent to the onboard PX4 flight controller.

\paragraph{Neural Network Architectures and Training Details}
In practice, we found that in addition to the drone velocity $v$, the aerodynamic effects also depend on the drone attitude and the rotor rotation speed. To that end, the input state $x$ to the deep neural network $\phi$ is a 11-d vector, consisting of a the \revB{drone} velocity (3-d), \revB{the drone attitude represented as} a quaternion (4-d), and \revB{the rotor speed commands as} a pulse width modulation (PWM) signal (4-d) (see \cref{fig:block-diagram,fig:training-data}). The DNN $\phi$ has four fully-connected hidden layers, with an architecture $11\rightarrow50\rightarrow60\rightarrow50\rightarrow4$ and Rectified Linear Units (ReLU) activation. We found that the three components of the wind-effect force, $f_x,f_y,f_z$, are highly correlated and sharing common features, so we use $\phi$ as the basis function for all the component. Therefore, the wind-effect force $f$ is approximated by
\begin{equation}
    f \approx \begin{bmatrix}
    \phi(x) & 0 & 0 \\
    0 & \phi(x) & 0 \\
    0 & 0 & \phi(x)
    \end{bmatrix}
    \begin{bmatrix}
    a_x \\ a_y \\ a_z 
    \end{bmatrix},
\end{equation}
where $a_x,a_y,a_z\in\B{R}^4$ are the linear coefficients for each component of the wind-effect force. We followed Algorithm \ref{alg:DIML} to train $\phi$ in PyTorch, which is an open source deep learning framework. We refer to the supplementary material for hyperparameter details (\ref{sec:supp-learning-hyperparameters}).

Note that we explicitly include the PWM as an input to the $\phi$ network. The PWM information is a function of $u=Rf_u$, which makes the controller law (e.g., \cref{eq:control-law-our}) non-affine in $u$. We solve this issue by using the PWM from the last time step as an input to $\phi$, to compute the desired force $u_d$ at the current time step. Because we train $\phi$ using spectral normalization (see Algorithm \ref{alg:DIML}), this method is stable and guaranteed to converge to a fixed point, as discussed in \cite{shi2019neural}.

\paragraph{\rev{Controller Implementation}}
\rev{
For experiments, we implemented a discrete form of the \nf~controllers, given in \ref{sec:supp-discrete}. 
For INDI, we implemented the position and acceleration controller from Sections III.A and III.B in \cite{tal_accurate_2021}. For $\LOne$ adaptive control, we followed the adaptation law first presented in \cite{pravitra_L1-adaptive_2020} and used in \cite{hanover_performance_2021} and augment the nonlinear baseline control with $\hat{f} = - u_{\LOne}$.
}

\section*{SUPPLEMENTARY MATERIALS}
\ref{sec:supp-drone-configuration}. Drone Configuration Details \\
\ref{sec:supp-learning-expressiveness}. The expressiveness of the learning architecture \\
\ref{sec:supp-learning-hyperparameters}. Hyperparameters for \DAML~\rev{and the interpretation} \\
\ref{sec:supp-discrete}. Discrete version of the proposed controller \\
\ref{sec:supp-stability}. Stability and robustness formal guarantees and proof \\
\ref{sec:supp-gain-tuning}. Gain tuning \\
\rev{\ref{supp:prediction-performance}. Force prediction performance} \\
\ref{sec:supp-localization-error}. Localization error analysis \\
\Cref{fig:learning_curve}. Training and Validation Loss\\
\Cref{fig:domain-shift}. Importance of domain-invariant representation\\
\Cref{fig:force-prediction}. Measured residual force versus adaptive control augmentation \\
\Cref{fig:localization-good}. Localization inconsistency \\
\Cref{tab:drone_configuration}. Drone configuration details \\
\cref{tab:hardware-comparison}. Hardware comparison\\
\Cref{tab:learning_hyperparameter}. Hyperparameters used in DAIML

\renewcommand{\refname}{REFERENCES}
\bibliography{scibib}
\bibliographystyle{unsrtnat}

\section*{ACKNOWLEDGEMENTS}
\textbf{Acknowledgements:} A.A. is also affiliated with NVIDIA Corporation, and Y.Y. is also with associated Argo AI. K.A. is currently affiliated with Purdue University. We thank J. Burdick and J.-J. E. Slotine for their helpful discussions. We thank M. Anderson for help with configuring the quadrotor platform, and M. Anderson and P. Spieler for help with hardware troubleshooting. We also thank N. Badillo and L. Pabon Madrid for help in experiments.

\textbf{Funding:} This research was developed with funding from the Defense Advanced Research Projects Agency (DARPA). This research was also conducted in part with funding from Raytheon Technologies.
The views, opinions, and/or findings expressed are those of the authors and
should not be interpreted as representing the official views or policies of
the Department of Defense or the U.S. Government. The experiments reported in this article were conducted at Caltech's Center for Autonomous Systems and Technologies (CAST). \textbf{Author contributions:} (1) S.-J.C. and Y.Y. directed the research activities. (2) G.S. and M.O'C. designed and implemented the meta-learning algorithm under the guidance of Y.Y., K.A., A.A., S.-J.C., while the last-layer adaptation idea was started with a discussion by G.S., M.O'C., X.S., and S.-J.C. (3) M.O'C., G.S. designed and implemented the adaptive control algorithm with inputs from S.-J.C., X.S. (4) M.O'C., G.S. performed experiments and evaluated the results. (5) M.O'C. conducted the theoretical analysis of the meta-learning based adaptive controller with input from S.-J.C., G.S., X.S. (6) G.S. analyzed the learning algorithm with feedback from Y.Y., K.A., A.A., and S.-J.C. (7) G.S., M.O'C. made all the figures and videos with input from all the others. (8) All authors prepared the manuscript. 
\textbf{Competing interests:} The authors
declare that they have no competing interests. \textbf{Data and materials availability:} All data needed to evaluate the conclusions in the article are present in the article or in the Supplementary Materials.


%
%


\sectionfont{\MakeUppercase}
\section*{Supplementary Materials}

\setcounter{table}{0}
\renewcommand{\thetable}{S\arabic{table}}

\setcounter{subsection}{0}
\renewcommand{\thesubsection}{Section S\arabic{subsection}}

\setcounter{figure}{0}
\renewcommand{\thefigure}{S\arabic{figure}}

\subsection{Drone Configuration Details}
\label{sec:supp-drone-configuration}

\Cref{tab:drone_configuration} presents the configuration information of the custom built drone (fig.~\ref{fig:intro}(A)) and the Intel Aero drone. We use both drones for data collection and use the custom built drone exclusively for experiments.

\begin{table}[ht]
    \centering
    \begin{tabular}{c|cc}
    & Custom built drone & Intel Aero drone \\
    \hline
    Weight & \SI{2.53}{kg} & \SI{1.47}{kg} \\
    Thrust-to-weight ratio & 2.2 & 1.6 \\ 
    Rotor tilt angle & \SI{12}{\degree} front, \SI{10}{\degree} rear & \SI{0}{\degree} \\ 
    Diameter & \SI{85}{cm} wide, \SI{75}{cm} long & \SI{52}{cm} wide, \SI{52}{cm} long \\ 
    Configuration & Wide-X4 & X4 \\
    On-board computer & Raspberry Pi 4 & Intel Aero computing board (Atom x7 processor) \\
    Flight controller & Pixhawk 4 running PX4 & Aero Flight Controller running PX4 \\
    \end{tabular}
    \caption{\textbf{Drone configuration details}~Configurations of the custom built drone and the Intel Aero drone with propeller guards.}
    \label{tab:drone_configuration}
\end{table}

Precision tracking for drones often relies on specialized hardware and optimized vehicle design, whereas our method achieves precise tracking using improved dynamics prediction through online learning. 
Although most researchers report the numeric tracking error of their method, it can be difficult to disentangle the improvement of the controller resulting from the algorithmic advancement versus the improvement from specialized hardware. For example moment of inertia generally scales with the radius squared and the lever arm for the motors scales with the radius, so the attitude maneuverability roughly scales with the inverse of the vehicle radius. Similarly, high thrust to weight ratio provides more attitude control authority during high acceleration maneuvers. More powerful motors, electronic speed controllers, and batteries together allow faster motor response time further improving maneuverability. Thus, state-of-the-art (SOTA) tracking performance usually requires specialized hardware often used for racing drones, resulting in a vehicle with greater maneuverability than our platform, a higher thrust to weight ratio, and using high-rate controllers sometimes even including direct motor RPM control. In contrast, our custom drone is more representative of typical consumer drone hardware. A detailed comparison with the hardware from some recent work in agile flight control is provided in \cref{tab:hardware-comparison}.

\begin{table}[ht]
    \centering
    \begin{small}
    \begin{tabular}{c|c|c|c|c}
         & \nf~& INDI \cite{tal_accurate_2021} & Differentially flat                          & Gaussian Process \\
         &     &                              & linear drag \cite{faessler_differential_2018} & MPC \cite{torrente2021data}\\
         \hline
        Flight computer & Raspberry Pi 4 & -- & laptop & laptop \\
        Flight controller & Pixhawk 4 & STM32H7 &  Raceflight Revolt & ?\\
        && (\SI{400}{MHz}) &&\\
        Flight controller firmware & PX4 & custom & ? & ?\\
        Mass [\SI{}{kg}] & 2.53 & 0.609 & 0.610 & 0.8 \\
        Total width [\SI{}{cm}] & 85 & ? & ? & ? \\
        Propeller diameter [\SI{}{in}] & 11 & 5 & 6 & ? \\
        Motor Spacing [\SI{}{cm}] & 39* & 18 & -- & ?\\
        Thrust-to-weight ratio [-] & 2.2 & ? & 4 & 5 \\
        Motion capture frequency [\SI{}{Hz}] & 100 & 360 & 200 & 100 \\
        MPC control frequency [\SI{}{Hz}] & -- & -- & -- & 50 \\
        Position control frequency [\SI{}{Hz}] & 50 & ? & 55 & ?\\
        Attitude control frequency [\SI{}{Hz}] & $<$1000 & 2000 & 4000 & ? \\
        Motor speed feedback & No & Optical encoders  & No & No\\
        & & (\SI{5}{kHz}) & & \\
        \multicolumn{5}{l}{}\\
        \multicolumn{5}{l}{? indicates information not provided}\\
        \multicolumn{5}{l}{-- indicates information not applicable}\\
        \multicolumn{5}{l}{*  front to back}
    \end{tabular}
    \caption{\textbf{Hardware comparison} Hardware configuration comparison with other quadrotors that demonstrate state-of-the-art trajectory tracking. Direct comparisons of performance are difficult due to the varying configurations, controller tuning, and flight arenas. However, most methods require extremely maneuverable quadrotors and onboard/offboard computation power to achieve state-of-the-art performance, while \nf~achieves state-of-the-art performance on more standard hardware with all control running onboard.}
    \label{tab:hardware-comparison}
    \end{small}
\end{table}

\subsection{The Expressiveness of the Learning Architecture}
\label{sec:supp-learning-expressiveness}

In this section, we theoretically justify the decomposition $\BB f(\BB x,\BB w)\approx \phi(\BB x)\BB a(\BB w)$. In particularly, we prove that any analytic function $\bar{f}(\BB x,\BB w):[-1,1]^n\times[-1,1]^m\rightarrow\B{R}$ can be split into a $\BB w$-invariant part $\bar{\phi}(\BB x)$ and a $\BB w$-dependant part $\bar{\BB a}(\BB w)$ in the structure $\bar{\phi}(\BB x)\bar{\BB a}(\BB w)$ with arbitrary precision $\epsilon$, where $\bar{\phi}(\BB x)$ and $\bar{\BB a}(\BB w)$ are two polynomials. Further, the dimension of $\bar{\BB a}(\BB w)$ only scales polylogarithmically with $1/\epsilon$. 

We first introduce the following multivariate polynomial approximation lemma in the hypercube proved in \cite{trefethen2017multivariate}.
\begin{lemma}(Multivariate polynomial approximation in the hypercube)
\label{lemma:poly_appro}
Let $\bar{f}(\BB x,\BB w):[-1,1]^n\times[-1,1]^m\rightarrow\B{R}$ be a smooth function of $[\BB x,\BB w]\in[-1,1]^{n+m}$ for $n,m\geq 1$. Assume $\bar{f}(\BB x,\BB w)$ is analytic for all $[\BB x,\BB w]\in\B{C}^{n+m}$ with $\Re(x_1^2+\cdots+x_n^2+w_1^2+\cdots+w_m^2)\geq-t^2$ for some $t>0$, where $\Re(\cdot)$ denotes the real part of a complex number. Then $\bar{f}$ has a uniformly and absolutely convergent multivariate Chebyshev series 
$$\sum_{k_1=0}^\infty\cdots\sum_{k_n=0}^\infty\sum_{l_1=0}^\infty\cdots\sum_{l_m=0}^\infty
b_{k_1,\cdots,k_n,l_1,\cdots,l_m}T_{k_1}(x_1)\cdots T_{k_n}(x_n)T_{l_1}(w_1)\cdots T_{l_m}(w_m).
$$
Define $\BB s=[k_1,\cdots,k_n,l_1,\cdots,l_m]$. The multivariate Chebyshev coefficients satisfy the following exponential decay property:
$$b_{s}=O\left( (1+t)^{-\|\BB s\|_2} \right).$$
\end{lemma}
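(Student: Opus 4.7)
The plan is to reduce the bound to a multivariate contour-integral representation of the Chebyshev coefficients and then deform the contour into the analyticity domain of $\bar{f}$, optimizing the deformation direction against the target multi-index $\BB s=(k_1,\dots,k_n,l_1,\dots,l_m)$. First I would recall that a univariate Chebyshev series is a Fourier cosine series in disguise via the Joukowski substitution $x=(z+z^{-1})/2$, under which $T_k(x)=(z^k+z^{-k})/2$, so the coefficient $b_k$ of a function $f$ on $[-1,1]$ can be written as a contour integral of $f((z+z^{-1})/2)$ over the unit circle. If $f$ extends analytically inside the Bernstein ellipse $E_\rho$ with parameter $\rho>1$, the contour can be inflated to $|z|=\rho$, giving $|b_k|=O(\rho^{-k})$.

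For the multivariate case, $b_{k_1,\dots,l_m}$ admits an analogous tensor-product contour representation over the product of unit circles in the $n+m$ Joukowski $z$-variables. The key step is to simultaneously inflate each circle to radius $\rho_i>1$ for the $x_i$-directions and $\sigma_j>1$ for the $w_j$-directions, provided the resulting polyellipse lies inside the analyticity region $\{\Re(\sum x_i^2+\sum w_j^2)\ge -t^2\}$. Parametrizing $x_i=(\rho_i e^{i\theta_i}+\rho_i^{-1}e^{-i\theta_i})/2$ and computing $\Re(x_i^2)$ explicitly, its worst-case value is $-(\rho_i-\rho_i^{-1})^2/4$, so a sufficient condition for the deformed contour to lie in the analyticity region is
\[
\sum_{i=1}^n \frac{(\rho_i-\rho_i^{-1})^2}{4} + \sum_{j=1}^m \frac{(\sigma_j-\sigma_j^{-1})^2}{4} \le t^2,
\]
and the Cauchy bound then gives $|b_{\BB s}|=O\!\left(\prod_i \rho_i^{-k_i}\prod_j \sigma_j^{-l_j}\right)$.

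Next I would optimize the $\rho_i,\sigma_j$ for each fixed $\BB s$. Writing $\rho_i=1+\alpha_i$ and $\sigma_j=1+\beta_j$, the task is to maximize $\sum_i k_i\ln(1+\alpha_i)+\sum_j l_j\ln(1+\beta_j)$ subject to $\sum_i \alpha_i^2+\sum_j \beta_j^2\le t^2$ (to leading order, using $\rho-\rho^{-1}\approx 2(\rho-1)$). Either by Lagrange multipliers or Cauchy--Schwarz, the optimum is $\alpha_i=tk_i/\|\BB s\|_2$ and $\beta_j=tl_j/\|\BB s\|_2$, for which $\prod_i\rho_i^{k_i}\prod_j\sigma_j^{l_j}\gtrsim(1+t)^{\|\BB s\|_2}$, yielding $b_{\BB s}=O((1+t)^{-\|\BB s\|_2})$ as claimed. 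Uniform and absolute convergence of the multivariate Chebyshev series then follows by summing this geometric-type majorant against the basis $T_{k_1}(x_1)\cdots T_{l_m}(w_m)$, which is uniformly bounded by $1$ on $[-1,1]^{n+m}$, noting that the number of multi-indices $\BB s$ with $\|\BB s\|_2\le R$ grows only polynomially in $R$.

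The main obstacle is justifying the simultaneous contour deformation rigorously: the analyticity hypothesis is a single ball constraint on the sum of squares rather than a product constraint, so one must verify that the entire product contour (not merely its worst-case point) lies inside $\{\Re(\sum x_i^2+\sum w_j^2)\ge -t^2\}$, and apply the $(n+m)$-fold Cauchy deformation one variable at a time via Fubini, checking that $\bar{f}$ remains analytic on each intermediate slice. The Cauchy--Schwarz step converting the product bound into an $\ell^2$-norm decay is straightforward once the contour is chosen, but pinning down the exact base $(1+t)$ rather than a generic $e^{ct}$ requires a careful small-$t$ expansion whose error is absorbed into the $O(\cdot)$ constant.
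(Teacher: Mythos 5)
The paper does not actually prove this lemma: it is quoted verbatim from Trefethen's \emph{Multivariate polynomial approximation in the hypercube} \cite{trefethen2017multivariate} and used as an imported black box, so there is no in-paper argument to compare against. Your sketch is, in essence, a reconstruction of Trefethen's own proof: tensor-product Cauchy/contour representation of the Chebyshev coefficients, deformation onto an anisotropic Bernstein polyellipse, and optimization of the radii $\rho_i$ against the multi-index $\BB s$ subject to the single ball constraint coming from the analyticity region. The argument is sound, and two of the points you flag as obstacles are in fact cleaner than you suggest. First, your computation that $\min_{x\in E_\rho}\Re(x^2)=-(\rho-\rho^{-1})^2/4$ is exact, and since this quantity is monotone in $\rho\ge1$ the condition $\sum_i(\rho_i-\rho_i^{-1})^2/4\le t^2$ places the entire closed polyellipse (and every intermediate polyellipse encountered in the one-variable-at-a-time Fubini deformation) inside the analyticity region, so the simultaneous deformation is legitimate; one only needs to add that $\bar f$ is bounded on the union of all admissible polyellipses, which holds because they all sit in a fixed compact subset of the (closed) analyticity domain. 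Second, no small-$t$ expansion is needed to pin down the base $(1+t)$: since $\rho-\rho^{-1}\le 2(\rho-1)$ for all $\rho\ge1$, the choice $\rho_i=1+t s_i/\|\BB s\|_2$ satisfies the exact constraint, not merely a leading-order one, and the inequality $\prod_i\rho_i^{s_i}\ge(1+t)^{\|\BB s\|_2}$ reduces to $\sum_i h(v_i)\ge h(1)$ for $h(v)=\sqrt{v}\log(1+t\sqrt{v})$ on the simplex $\sum_i v_i=1$, which holds because $h$ is concave so the minimum of the sum is attained at a vertex. With those two observations your outline closes completely and recovers the cited result, including the uniform and absolute convergence via $|T_k|\le1$ and polynomial growth of the number of multi-indices in a Euclidean ball.
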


Note that this lemma shows that the truncated Chebyshev expansions
$$\mathcal{C}_p=\sum_{k_1=0}^p\cdots\sum_{k_n=0}^p\sum_{l_1=0}^p\cdots\sum_{l_m=0}^p
b_{k_1,\cdots,k_n,l_1,\cdots,l_m}T_{k_1}(x_1)\cdots T_{k_n}(x_n)T_{l_1}(w_1)\cdots T_{l_m}(w_m)$$
will converge to $\bar{f}$ with the rate $O((1+t)^{-p\sqrt{n+m}})$ for some $t>0$, i.e., $\sup_{[\BB x,\BB w]\in[-1,1]^{n+m}}\|\bar{f}(\BB x,\BB w)-\mathcal{C}_p(\BB x,\BB w)\|\leq O((1+t)^{-p\sqrt{n+m}})$. Finally we are ready to present the following representation theorem.

\begin{theorem}
\label{thm:representation}
$\bar{f}(\BB x,\BB w)$ is a function satisfying the assumptions in Lemma~\ref{lemma:poly_appro}. For any $\epsilon>0$, there exist $h\in\B{Z}^+$, and two Chebyshev polynomials $\bar{\phi}(\BB x):[-1,1]^n\rightarrow\B{R}^{1\times h}$ and  $\bar{\BB a}(\BB w):[-1,1]^m\rightarrow\B{R}^{h\times 1}$ such that 
$$\sup_{[\BB x,\BB w]\in[-1,1]^{n+m}}\|\bar{f}(\BB x,\BB w)-\bar{\phi}(\BB x) \bar{\BB a}(\BB w)\|\leq\epsilon$$
and $h=O((\log(1/\epsilon))^m)$.
\end{theorem}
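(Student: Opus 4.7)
The plan is to read off $\bar\phi$ and $\bar{\BB a}$ directly from the truncated multivariate Chebyshev expansion of $\bar f$ provided by \cref{lemma:poly_appro}, and then to choose the truncation degree $p$ just large enough to achieve precision $\epsilon$. The essential observation is that the tensor-product Chebyshev basis $T_{\BB k}(\BB x)\,T_{\BB l}(\BB w)$ already factors across the variable groups $\BB x$ and $\BB w$, so the standard polynomial approximant automatically takes the bilinear form we need; all that remains is to count the number of distinct $\BB w$-terms and to calibrate $p$ against $\epsilon$.

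First I would invoke \cref{lemma:poly_appro} and regroup the truncated Chebyshev expansion $\mathcal{C}_p$ by the $\BB w$-multi-index:
\begin{equation*}
\mathcal{C}_p(\BB x,\BB w)=\sum_{\BB l\in\{0,\dots,p\}^m}\phi_{\BB l}(\BB x)\,T_{\BB l}(\BB w),\qquad \phi_{\BB l}(\BB x):=\sum_{\BB k\in\{0,\dots,p\}^n}b_{\BB k,\BB l}\,T_{k_1}(x_1)\cdots T_{k_n}(x_n),
\end{equation*}
with $T_{\BB l}(\BB w):=T_{l_1}(w_1)\cdots T_{l_m}(w_m)$. I would then define $\bar\phi(\BB x)$ to be the row vector formed by the $(p+1)^m$ scalar polynomials $\phi_{\BB l}(\BB x)$ under a fixed ordering of the multi-indices $\BB l$, and $\bar{\BB a}(\BB w)$ to be the corresponding column vector of products $T_{\BB l}(\BB w)$. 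By construction $\bar\phi(\BB x)\,\bar{\BB a}(\BB w)=\mathcal{C}_p(\BB x,\BB w)$ with $h=(p+1)^m$, and both factors are polynomials in their respective variables, as required by the theorem statement.

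Second, I would calibrate $p$ using the convergence rate stated right after \cref{lemma:poly_appro}: there exists $t>0$ such that $\sup_{[\BB x,\BB w]\in[-1,1]^{n+m}}\|\bar f-\mathcal{C}_p\|\leq C(1+t)^{-p\sqrt{n+m}}$ for some constant $C$ depending on $\bar f$, $n$, and $m$. Requiring this upper bound to be at most $\epsilon$ and solving for $p$ gives $p=\bigl\lceil \log(C/\epsilon)/(\sqrt{n+m}\,\log(1+t))\bigr\rceil=O(\log(1/\epsilon))$, and hence $h=(p+1)^m=O((\log(1/\epsilon))^m)$, which is exactly the polylogarithmic dimension claimed. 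The main obstacle here is conceptual rather than computational: one must recognize that the inner dimension $h$ is controlled by the number of $\BB w$-Chebyshev multi-indices $(p+1)^m$ and \emph{not} by the total number of terms $(p+1)^{n+m}$ in the truncated expansion. This asymmetry is precisely what yields the polylogarithmic scaling in $1/\epsilon$ with exponent $m$ (the environment dimension), independent of $n$ (the robot-state dimension), and is what makes the factored architecture $\phi(\BB x)\BB a(\BB w)$ an efficient representation when $m$ is small.
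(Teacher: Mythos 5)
Your proposal is correct and follows essentially the same route as the paper's proof: both factor the truncated tensor-product Chebyshev expansion $\mathcal{C}_p$ by grouping over the $\BB w$-multi-indices, take $h=(p+1)^m$, and calibrate $p=O(\log(1/\epsilon)/\sqrt{n+m})$ from the exponential decay of the Chebyshev coefficients. No gaps.
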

\begin{proof}
First note that there exists $p=O\left(\frac{\log(1/\epsilon)}{\sqrt{n+m}}\right)$ such that $\sup_{[\BB x,\BB w]\in[-1,1]^{n+m}}\left\|\bar{f}(\BB x,\BB w)-\mathcal{C}_p(\BB x,\BB w)\right\|\leq\epsilon$. To simplify the notation, define
$$\begin{aligned}
g(\BB x,\BB k,\BB l) &= g(x_1,\cdots,x_n,k_1,\cdots,k_n,l_1,\cdots,l_m)=b_{k_1,\cdots,k_n,l_1,\cdots,l_m}T_{k_1}(x_1)\cdots T_{k_n}(x_n) \\
g(\BB w,\BB l) &= g(w_1,\cdots,w_m,l_1,\cdots,l_m) = T_{l_1}(w_1)\cdots T_{l_n}(w_m)
\end{aligned}$$
Then we have
$$\mathcal{C}_p(\BB x,\BB w) = \sum_{k_1,\cdots,k_n=0}^p\,\,\sum_{l_1,\cdots,l_m=0}^p g(\BB x,k_1,\cdots,k_n,l_1,\cdots,l_m)g(\BB w,l_1,\cdots,l_m)$$
Then we rewrite $\mathcal{C}_p$ as $\mathcal{C}_p(\BB x,\BB w)=\bar{\phi}(\BB x)\bar{\BB a}(\BB w)$:
\begin{equation*}
\begin{aligned}
\bar{\phi}(\BB x)^\top &= \begin{bmatrix}
\sum_{k_1,\cdots,k_n=0}^p\, g(\BB x,k_1,\cdots,k_n,\BB l=[0,0,\cdots,0]) \\
\sum_{k_1,\cdots,k_n=0}^p\, g(\BB x,k_1,\cdots,k_n,\BB l=[1,0,\cdots,0]) \\
\sum_{k_1,\cdots,k_n=0}^p\, g(\BB x,k_1,\cdots,k_n,\BB l=[2,0,\cdots,0]) \\
\vdots \\
\sum_{k_1,\cdots,k_n=0}^p\, g(\BB x,k_1,\cdots,k_n,\BB l=[p,p,\cdots,p])
\end{bmatrix},
\bar{a}(w) = \begin{bmatrix}
g(\BB w,\BB l=[0,0,\cdots,0]) \\
g(\BB w,\BB l=[1,0,\cdots,0]) \\
g(\BB w,\BB l=[2,0,\cdots,0]) \\
\vdots \\
g(\BB w,\BB l=[p,p,\cdots,p])
\end{bmatrix}
\end{aligned}
\end{equation*}
Note that the dimension of $\bar{\phi}(\BB x)$ and $\bar{\BB a}(\BB w)$ is
$$h=(p+1)^{m}=O\left(\left(1+\frac{\log(1/\epsilon)}{\sqrt{n+m}}\right)^m\right)=O\left((\log(1/\epsilon))^m\right)$$
\end{proof}
Note that Theorem \ref{thm:representation} can be generalized to vector-valued functions with bounded input space straightforwardly. Finally, since deep neural networks are universal approximators for polynomials \cite{yarotsky2017error}, Theorem \ref{thm:representation} immediately guarantees the expressiveness of our learning structure, i.e., $\phi(\BB x)\BB a(\BB w)$ can approximate $\BB f(\BB x,\BB w)$ with arbitrary precision, where $\phi(\BB x)$ is a deep neural network and $\ahat$ includes the linear coefficients for all the elements of $\BB f$. In experiments, we show that a four-layer neural network can efficiently learn an effective representation for the underlying unknown dynamics $\BB f(\BB x,\BB w)$.

\subsection{Hyperparameters for \DAML~and the Interpretation}
\label{sec:supp-learning-hyperparameters}

We implemented \DAML~(Algorithm \ref{alg:DIML}) using PyTorch, with hyperparameters reported in \cref{tab:learning_hyperparameter}. We iteratively tuned these hyperparameters by trial and error. We notice that the behavior of the learning algorithm is not sensitive to most of parameters in \cref{tab:learning_hyperparameter}. The training process is shown in fig.~\ref{fig:learning_curve}, where we present the $f$ loss curve on both training set and validation set using three random seeds. The $f$ loss is defined by $\sum_{i\in B}\|\BB y_k^{(i)}-\phi(\BB x_k^{(i)})\BB a^*\|^2$ (see Line \ref{alg:DIML:learning_phi} in Algorithm \ref{alg:DIML}), which reflects how well $\phi$ can approximate the unknown dynamics $\BB f(\BB x,\BB w)$. The validation set we considered is from the figure-8 trajectory tracking tasks using the PID and nonlinear baseline methods. Note that the training set consists of a very different set of trajectories (using random waypoint tracking, see Results)
, and this difference is for studying whether and when the learned model $\phi$ starts over-fitting during the training process.

\begin{figure}
    \centering
    \includegraphics[width=0.8\linewidth]{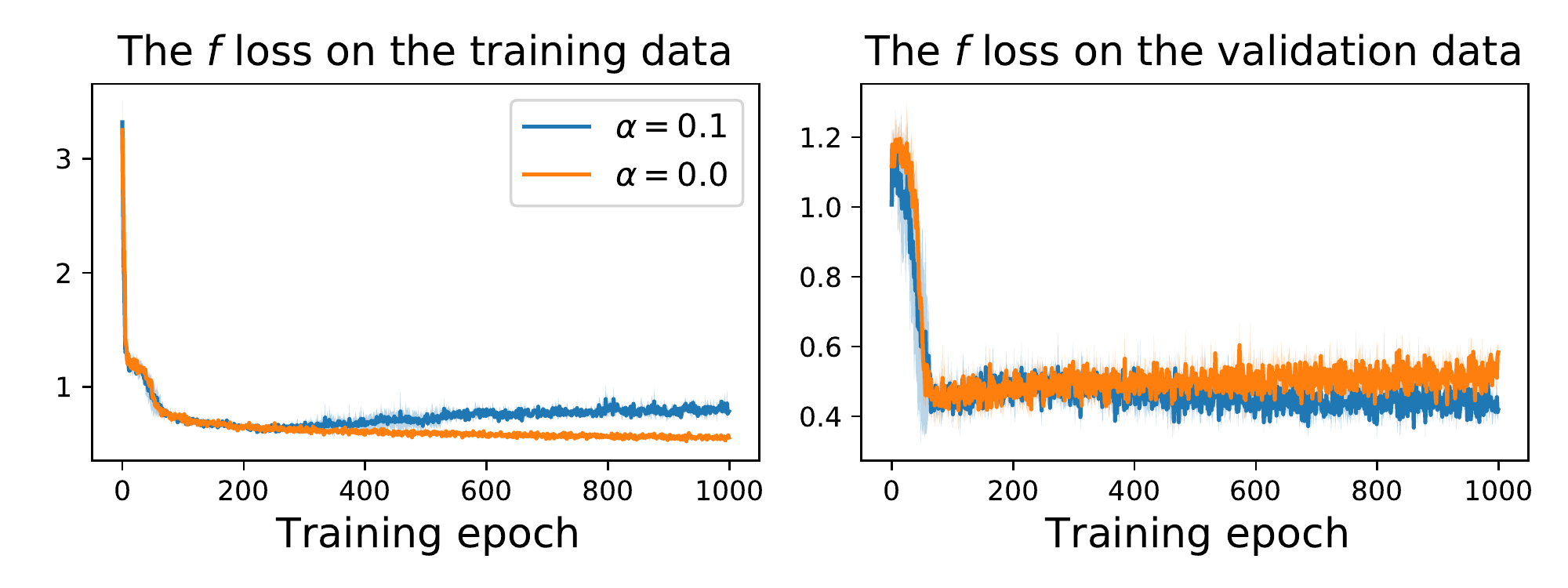}
    \caption{\textbf{Training and validation loss.} The evolution of the $f$ loss on the training data and validation data in the training process, from three random seeds. Both mean (the solid line) and standard deviation (in the shaded area) are presented. Training with the adversarial regularization term ($\alpha=0.1$) has similar behaviors as $\alpha=0$ (no regularization) in the early phase before 300 training epochs, except that it converges slightly faster. However, the regularization term effectively avoids over-fitting and has smaller error on the validation dataset after 300 training epochs.}
    \label{fig:learning_curve}
\end{figure}

\begin{table}
    \centering
    \begin{tabular}{c|c}
    Architecture of $\phi$ net & $11\rightarrow50\rightarrow60\rightarrow50\rightarrow4$ with ReLU activation functions \\
    Architecture of $h$ net & $4\rightarrow128\rightarrow6$ with ReLU activation functions \\
    Batch size of $B_a$ & 128 \\
    Batch size of $B$ & 256 \\
    Loss function for $h$ & Cross-entropy loss \\
    Learning rate for training $\phi$ & $0.0005$ \\
    Learning rate for training $h$ & $0.001$ \\
    Discriminator training frequency $\eta$ & $0.5$ \\
    Normalization constant $\gamma$ & $10$ \\
    The degree of regularization $\alpha$ & $0.1$ \\
    \end{tabular}
    \caption{Hyperparameters used in \DAML~(Algorithm \ref{alg:DIML}).}
    \label{tab:learning_hyperparameter}
\end{table}
 
We emphasize a few important parameters as follows. (i) The frequency $0<\eta\leq1$ is to control how often the discriminator $\BB h$ is updated. Note that $\eta=1$ corresponds to the case that $\phi$ and $h$ are both updated in each iteration. We use $\eta=0.5$ for training stability, which is also commonly used in training generative adversarial networks \cite{goodfellow2014generative}. (ii) The regularization parameter $\alpha\geq0$. Note that $\alpha=0$ corresponds to the non-adversarial meta-learning case which does not incorporate the adversarial regularization term in \cref{eq:optimization-both-loss}. From fig.~\ref{fig:learning_curve}, clearly a proper choice of $\alpha$ can effectively avoid over-fitting. Moreover, another benefit of having $\alpha>0$ is that the learned model is more explainable. As observed in fig.~\rev{fig:training-tsne}, $\alpha>0$ disentangles the linear coefficients $\BB a^*$ between wind conditions. However, if $\alpha$ is too high it may degrade the prediction performance, so we recommend using relatively small value for $\alpha$ such as $0.1$.

\rev{\paragraph{The importance of having a domain-invariant representation.}
We use the following example to illustrate the importance of having a domain-invariant representation $\phi(\BB x)$ for online adaptation. Suppose the data distribution in wind conditions 1 and 2 are $P_1(\BB x)$ and $P_2(\BB x)$, respectively, and they do not overlap. Ideally, we would hope these two conditions share an invariant representation and the latent variables are distinct ($\BB a^{(1)}$ and $\BB a^{(2)}$ in the first line in fig.~\ref{fig:domain-shift} shown below). However, because of the expressiveness of DNNs, $\phi$ may memorize $P_1$ and $P_2$ and learn two modes $\phi_1(\BB x)$ and $\phi_2(\BB x)$. In the second line in the following figure, $\phi_1$ and $\phi_2$ are triggered if $\BB x$ is in $P_1$ and $P_2$, respectively ($\mathbf{1}_{\BB x\in{P_1}}$ and $\mathbf{1}_{\BB x\in{P_2}}$ are indicator functions), such that the latent variable $\BB a$ is identical in both wind conditions. Such an overfitted $\phi$ is not robust and not generalizable: for example, if the drone flies to $P_1$ in wind condition 2, the wrong mode $\phi_1$ will be triggered.}

\begin{figure}[h]
    \centering
    \includegraphics[width=0.8\linewidth]{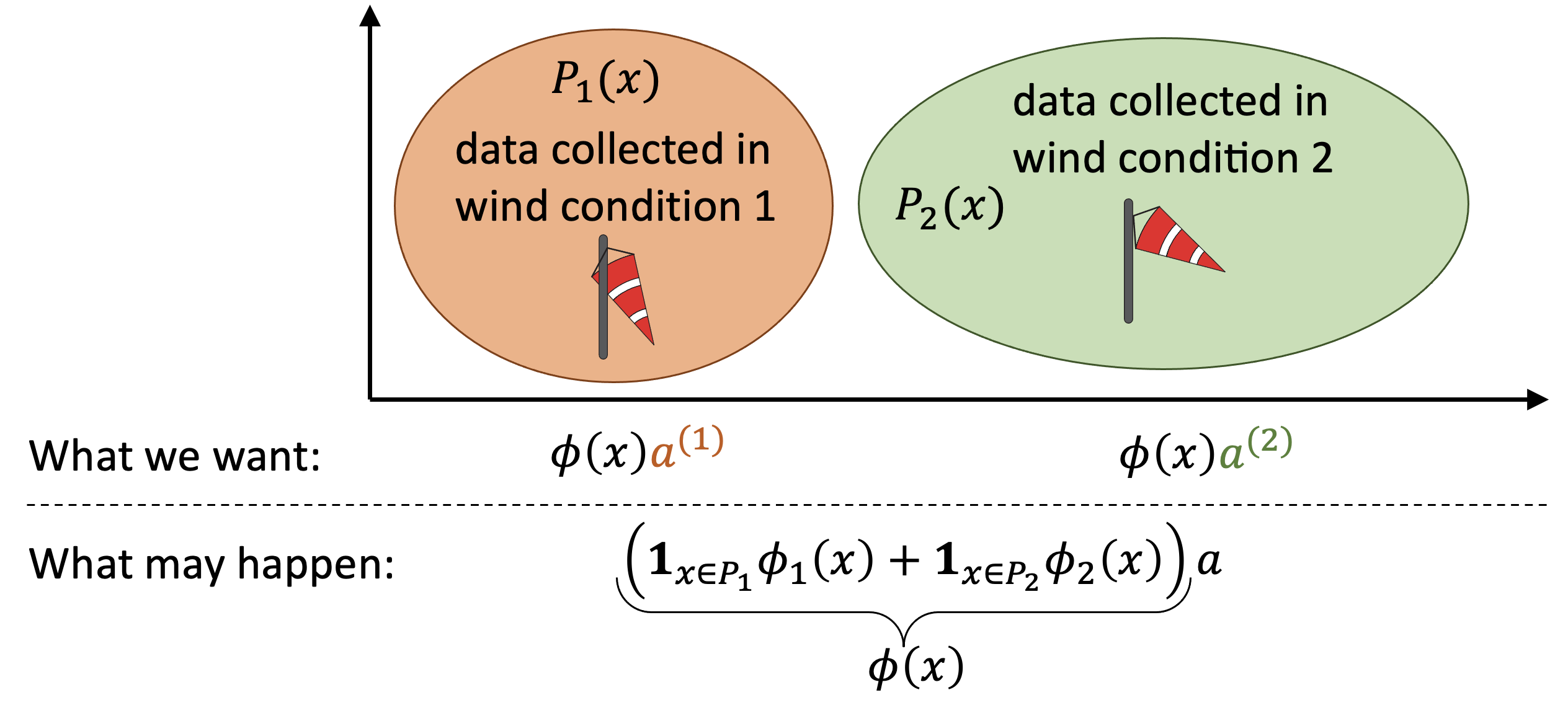}  
    \caption{\textbf{Importance of domain-invariant representation.}}
    \label{fig:domain-shift}
\end{figure}
\begin{center} 
\end{center}

\rev{The key idea to tackle this challenge is to encourage diversity in the latent space, which is why we introduced a discriminator in DAIML. Figure~\ref{fig:training-tsne} shows DAIML indeed makes the latent space much more disentangled.}

\subsection{Discrete Version of the Proposed Controller}
\label{sec:supp-discrete}

In practice, we implement \nf on a digital system, and therefore, we require a discrete version of the controller. The feedback control policy $\BB u$ remains the same as presented in the main body of this article. However, the adaptation law must be integrated and therefore we must be concerned with both the numerical accuracy and computation time of this integration, particularly for the covaraince matrix $P$. During the development of our algorithm, we observed that a naive one-step Euler integration of the continuous time adaptation law would sometimes result $P$ becoming non-positive-definite due to a large $\dot{P}$ magnitude and a coarse integration step size (see \cite{dieci_positive_1994} for more discussion on the positive definiteness of numerical integration of the differential Riccati equation). To avoid this issue, we instead implemented the adaptation law in two discrete steps, a propagation and an update step, summarized as below. We denote the time at step $k$ as $t_k$, the value of a parameter before the update step but after the propagation step with a subscript $t_k^-$, and the value after both the propagation and update step with a subscript $t_k^+$. The value used in the controller is the value after both the propagation and update steps, that is $\hat{\BB a}(t_k) = \hat{\BB a}_{t_k^+}$. During the propagation step in \cref{eq:discrete-propagate-a,eq:discrete-propagate-P} both $\ahat$ and $P$ are regularized. Then, in the update step in \cref{eq:discrete-update-a,eq:discrete-update-P}, $P$ and $\ahat$ are updated according to the gain in \cref{eq:discrete-kalman-gain}. This mirrors a discrete Kalman filter implementation \cite{kalman_new_1960} with the tracking error term added in the update step. The discrete Kalman filter exactly integrates the continuous time Kalman filter when the prediction error $\BB e$, tracking error $\BB s$, and learned basis functions $\phi$ are constant between time steps ensuring the positive definiteness of $P$. 
\begin{align}
    \label{eq:discrete-propagate-a}
    \hat{\BB a}_{t_k^-} &= \underbrace{(1 - \lambda \Delta t_k)}_{\text{damping}} \hat{\BB a}_{t_{k-1}^+}
    \\
    \label{eq:discrete-propagate-P}
    P_{t_k^-} &= (1 -  \lambda \Delta t_k)^2 P_{t_{k-1}^+} + Q \Delta t_k 
    \\
    \label{eq:discrete-kalman-gain}
    K_{t_k} &= P_{t_k^-}  \phi_{t_k}^\top \left(\phi_{t_k} P_{t_k^-} \phi_{t_k}^\top + R \Delta t_k\right)^{-1} 
    \\
    \label{eq:discrete-update-a}
    \hat{\BB a}_{t_k^+} &= \hat{\BB a}_{t_k^-} - \underbrace{K_{t_k}\left( \phi_{t_k}\hat{\BB a}_{t_k^-} - \BB y_{t_k}\right)}_{\text{prediction error adaptation}} - \underbrace{P_{t_k^-} \phi_{t_k}^\top \BB s_{t_k}}_{\text{tracking error adaptation}} 
    \\
    \label{eq:discrete-update-P}
    P_{t_k^+} &= \left(I - K_{t_k} \phi_{t_k}\right) P_{t_k^-} \left(I - K_{t_k} \phi_{t_k}\right)^\top + K_{t_k} R \Delta t_k K_{t_k}^\top 
\end{align}

\subsection{Stability and Robustness Formal Guarantees and Proof}
\label{sec:supp-stability}

We divide the proof of \Cref{eq:exponential-bound} into two steps. First, in \cref{thm:s-stability}, we show that the combined composite velocity tracking error and adaptation error, $\|[\BB s;\tilde{\BB a}]\|$, exponentially converges to a bounded error ball. This implies the exponential convergence of $\BB s$. Then in \Cref{cor:q-bound} we show that when $\BB s$ is exponentially bounded, $\tilde{\BB q}$ is also exponentially bounded. Combining the exponential bound from \cref{thm:s-stability} and the ultimate bound from \cref{cor:q-bound} proves \cref{thm:stability-proof}.

Before discussing the main proof, let us consider the robustness properties of the feedback controller without considering any specific adaptation law. Taking the dynamics \cref{eq:open-loop-dynamics}, control law \cref{eq:control-law-our}, the composite velocity error definition \cref{eq:s-dynamics}, and the parameter estimation error $\tilde{\BB a}=\hat{\BB a}-\BB a$, we find
\begin{equation}
    \label{eq:s-closed-loop}
    M\dot{\BB s}+(C+K)\BB s=-\phi \tilde{\BB a}+\BB d
\end{equation}
We can use the Lyapunov function $\mathcal{V}=\BB s^\top M \BB s$ under the assumption of bounded $\tilde{\BB a}$ to show that
\begin{align}
  \lim_{t\rightarrow\infty}\left\| \BB s \right\| &\leq \frac{\sup_t\|\BB d-\phi\tilde{\BB a}\| \lammax(M)}{\lammin(K) \lammin(M)}
\end{align}
Taking this results alone, one might expect that any online estimator or learning algorithm will lead to good performance. 
However, the boundedness of $\tilde{\BB a}$ is not guaranteed; Slotine and Li discuss this topic thoroughly \cite{slotine1991applied}. In the full proof below, we show the stability and robustness of the \nf~adaptation algorithm.

First, \revC{we introduce the parameter measurement noise $\bar{\epsilon}$, where $\bar{\epsilon}=\BB y - \phi \BB a$. Thus, $\bar{\epsilon}=\epsilon+\BB d$ and $\|\bar{\epsilon}\|\leq \|\epsilon\| + \|\BB d\|$ by the triangle inequality. 
Using the above closed loop dynamics \cref{eq:s-closed-loop}, the parameter estimation error $\tilde{\BB a}$, and the adaptation law \cref{eq:adaptation-law-a,eq:adaptation-law-P}, the combined velocity and parameter-error closed-loop dynamics are given by} \begin{align}
    \begin{bmatrix}
        M & 0 \\
        0 & P^{-1} 
    \end{bmatrix}
    \begin{bmatrix}
        \dot{\BB s} \\ \dot {\tilde{\BB a}}
    \end{bmatrix} 
    + 
    \begin{bmatrix}
        C + K & \phi \\
        -\phi^T & \phi^\top R\inv \phi + \lambda P \inv \\ 
    \end{bmatrix}
    \begin{bmatrix}
        \BB s \\ \tilde{\BB a}
    \end{bmatrix} 
    =
    \begin{bmatrix}
        \BB d \\
        \phi^\top R\inv \bar{\epsilon} - P\inv \lambda \BB a - P\inv \dot{\BB a}
    \end{bmatrix}
    \label{eq:stacked-dynamics}
    \\
    \frac{d}{dt}\left(P\inv\right) = - P\inv \dot P P\inv = P\inv \left(2 \lambda P - Q + P \phi ^\top R^{-1} \phi P\right) P\inv
    \label{eq:Pinv-dynamics}
\end{align}

For our stability proof, we rely on the fact that $P\inv$ is both uniformly positive definite and uniformly bounded, that is, there exists some positive definite, constant matrices $A$ and $B$ such that $A \succeq P\inv \succeq B$.
Dieci and Eirola \cite{dieci_positive_1994} show the slightly weaker result that that $P$ is positive definite and finite when $\phi$ is bounded under the looser assumption $Q\succeq0$. Following the proof from \cite{dieci_positive_1994} with the additional assumption that $Q$ is uniformly positive definite, one can show the uniform definiteness and uniform boundedness of $P$. Hence, $P\inv$ is also uniformly positive definite and uniformly bounded.

\begin{theorem}
\label{thm:s-stability}
Given dynamics that evolve according to \cref{eq:stacked-dynamics,eq:Pinv-dynamics}, uniform positive definiteness and uniform boundedness of $P\inv$, the norm of $\stackedstate$ exponentially converges to the bound given in \cref{eq:exponential-bound-long} with rate $\alpha$.
\begin{align}
    \label{eq:exponential-bound-long}
    \lim_{t\rightarrow\infty}\left\|\stackedstate\right\| &\leq \frac{1}{\alpha \lammin(\metric)}
    \left( \sup_t\|\BB d\| + \sup_t(\|\phi^\top R\inv \bar{\epsilon}\|) + \lammax(P\inv)\sup_t(\|\lambda \BB a + \dot{\BB a}\|) \right)
\end{align}
where $\alpha$ and $\metric$ are functions of $\phi, R, Q, K, M$ and $\lambda$, and $\lammin(\cdot)$ and $\lammax(\cdot)$ are the minimum and maximum eigenvalues of $(\cdot)$ over time, respectively. Given \cref{cor:q-bound} and \cref{eq:exponential-bound-long}, the bound in \cref{eq:exponential-bound} is proven. Note $\lammax(P^{-1})=1/\lammin(P)$ and a sufficiently large value of $\lammin(P)$ will make the RHS of \cref{eq:exponential-bound-long} small.
\end{theorem}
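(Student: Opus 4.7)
The plan is to exploit the block-diagonal structure appearing on the left-hand side of \cref{eq:stacked-dynamics} by taking the weighted Lyapunov candidate
\begin{equation*}
\lyap = \tfrac{1}{2}\stackedstate^{\top} \metric \stackedstate, \qquad \metric = \begin{bmatrix} M & 0 \\ 0 & P\inv \end{bmatrix},
\end{equation*}
so that $\tfrac{1}{2}\lammin(\metric)\|\stackedstate\|^{2} \leq \lyap \leq \tfrac{1}{2}\lammax(\metric)\|\stackedstate\|^{2}$ converts any bound on $\lyap$ into a bound on the Euclidean norm. Uniform positive definiteness and uniform boundedness of $P\inv$, assumed in the statement and justified by the Dieci--Eirola analysis of the Riccati flow \cite{dieci_positive_1994}, make $\metric$ uniformly definite, so these conversions remain valid at every time.

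Differentiating $\lyap$ along \cref{eq:stacked-dynamics,eq:Pinv-dynamics}, three simplifications conspire to yield a clean sign-definite remainder. First, the Euler--Lagrange skew-symmetry of $\dot M - 2C$ absorbs the Coriolis contribution in the $\BB s$ block, reducing it to $-\BB s^{\top} K \BB s - \BB s^{\top}\phi\atilde + \BB s^{\top}\BB d$. Second, direct computation from \cref{eq:Pinv-dynamics} gives $\tfrac{d}{dt}(P\inv) = 2\lambda P\inv - P\inv Q P\inv + \phi^{\top} R\inv \phi$; the resulting $+\lambda\atilde^{\top} P\inv \atilde$ contribution cancels the regularization term $-\lambda\atilde^{\top} P\inv \atilde$ produced by \cref{eq:adaptation-law-a} exactly, while the $+\tfrac{1}{2}\atilde^{\top}\phi^{\top} R\inv \phi\,\atilde$ contribution merely halves the prediction-error quadratic. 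Third, the cross terms $-\BB s^{\top}\phi\atilde$ and $+\atilde^{\top}\phi^{\top}\BB s$ cancel identically, which is precisely the feature of composite adaptation that removes any need for persistent excitation. Collecting the survivors yields
\begin{equation*}
\dot{\lyap} \leq -\BB s^{\top} K \BB s - \tfrac{1}{2}\atilde^{\top} P\inv Q P\inv \atilde + \BB s^{\top}\BB d + \atilde^{\top}\phi^{\top} R\inv \bar{\epsilon} - \atilde^{\top} P\inv(\lambda \BB a + \dot{\BB a}),
\end{equation*}
where $-\tfrac{1}{2}\atilde^{\top}\phi^{\top} R\inv \phi\atilde$ has been discarded for being negative semidefinite.

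From here I would bound the two negative quadratics below by $2\alpha\lyap$ with $\alpha$ proportional to $\min(\lammin(K)/\lammax(M),\,\lammin(P\inv Q P\inv)/\lammax(P\inv))$, and apply Cauchy--Schwarz to the remaining inner products while keeping $P\inv$ attached to $(\lambda\BB a + \dot{\BB a})$ before bounding, since that is exactly what supplies the $\lammax(P\inv)$ weighting in the final bound. Substituting $\|\stackedstate\| \leq \sqrt{2\lyap/\lammin(\metric)}$ into the $\|\stackedstate\|\rho(t)$ perturbation, where $\rho(t) = \|\BB d\| + \|\phi^{\top} R\inv \bar\epsilon\| + \lammax(P\inv)\|\lambda\BB a + \dot{\BB a}\|$, and changing variables to $W = \sqrt{\lyap}$ linearizes the inequality into $\dot W \leq -\alpha W + \rho(t)/\sqrt{2\lammin(\metric)}$. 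The comparison lemma then delivers exponential decay with ultimate bound $\limsup_t W \leq \sup_t\rho(t)/(\alpha\sqrt{2\lammin(\metric)})$, and translating back through $\|\stackedstate\| \leq W\sqrt{2/\lammin(\metric)}$ produces exactly \cref{eq:exponential-bound-long}; \cref{cor:q-bound} then supplies \cref{eq:exponential-bound}.

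The main obstacle is bookkeeping rather than a deep insight: the $\lammax(P\inv)$ weighting that singles out the $\dot{\BB a}$ term in the stated bound does not fall out of a casual Young's inequality but requires keeping the $P\inv$ premultiplier intact until the last Cauchy--Schwarz step, and the $\sqrt{2/\lammin(\metric)}$ factors must line up so that a single $1/\lammin(\metric)$ survives in the final bound. A secondary concern is making the rate $\alpha$ bounded away from zero uniformly in $t$, which requires $\lammin(P\inv Q P\inv)$ bounded below --- equivalently, $P$ bounded above --- along the Riccati flow; this rests on $\phi$ remaining bounded, a condition already enforced by the spectral normalization built into \DAML.
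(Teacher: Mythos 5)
Your proposal is correct and follows essentially the same route as the paper's proof: the same block-diagonal Lyapunov function $\stackedstate^\top \metric \stackedstate$ (up to the harmless factor of $\tfrac12$), the same use of skew-symmetry of $\dot M - 2C$, the same cancellation of the $\lambda P\inv$ regularization against the $\tfrac{d}{dt}(P\inv)$ contribution and of the $\pm\phi$ cross terms, and the same $\mathcal{W}=\sqrt{\lyap}$ change of variables with the Comparison Lemma. The only cosmetic difference is that you discard the positive-semidefinite $\phi^\top R\inv\phi$ block early, whereas the paper carries it into the quadratic form before noting it cannot be used to define $\alpha$; the resulting bound is identical.
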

\begin{proof}

Now consider the Lyapunov function $\lyap$ given by
\begin{align}
    \lyap &= \stackedstate^\top
        \begin{bmatrix}
            M & 0 \\
            0 & P\inv
        \end{bmatrix}
        \stackedstate
\end{align}
This Lyapunov function has the derivative
\begin{align}
    \dot{\lyap} &= 2 \stackedstate^\top 
        \begin{bmatrix}
            M & 0 \\
            0 & P\inv
        \end{bmatrix}
        \stackedstatedot
        +
        \stackedstate^\top 
        \begin{bmatrix}
            \dot M & 0 \\
            0 & \frac{d}{dt}\left(P\inv \right) 
        \end{bmatrix}
        \stackedstate
    \\ &=
        - 2 \stackedstate^\top
        \begin{bmatrix}
            C + K & \phi \\
            -\phi^T & \phi^\top R\inv \phi + \lambda P \inv \\ 
        \end{bmatrix}
        \begin{bmatrix}
            \BB s \\ \tilde{\BB a}
        \end{bmatrix}
        + 2 \stackedstate^\top
        \begin{bmatrix}
            \BB d \\
            \phi^\top R\inv \bar{\epsilon} - P\inv \lambda \BB a - P\inv \dot{\BB a}
        \end{bmatrix}
        +
        \\&\quad
        \stackedstate^\top 
        \begin{bmatrix}
            \dot M & 0 \\
            0 & \frac{d}{dt}\left(P\inv \right)
        \end{bmatrix}
        \stackedstate
    \\ &=
        - 2 \stackedstate^\top
        \begin{bmatrix}
            K & \phi \\
            -\phi^T & \phi^\top R\inv \phi + \lambda P \inv \\ 
        \end{bmatrix}
        \begin{bmatrix}
            \BB s \\ \tilde{\BB a}
        \end{bmatrix}
        + 2 \stackedstate^\top
        \begin{bmatrix}
            \BB d \\
            \phi^\top R\inv \bar{\epsilon} - P\inv \lambda \BB a - P\inv \dot{\BB a}
        \end{bmatrix}
        \\&\quad
        +
        \stackedstate^\top 
        \begin{bmatrix}
            0 & 0 \\
            0 & 2 \lambda P\inv - P\inv Q P\inv + \phi ^\top R^{-1} \phi
        \end{bmatrix}
        \stackedstate
    \\ &=
        - \stackedstate^\top
        \begin{bmatrix}
            2 K & 0 \\
            0 & \phi^\top R\inv \phi + P\inv Q P\inv\\
        \end{bmatrix}
        \begin{bmatrix}
            \BB s \\ \tilde{\BB a}
        \end{bmatrix}
        + 2 \stackedstate^\top
        \begin{bmatrix}
            \BB d \\
            \phi^\top R\inv \bar{\epsilon} - P\inv \lambda \BB a - P\inv \dot{\BB a}
        \end{bmatrix}
\end{align}
where we used the fact $\dot{M}-2C$ is skew-symmetric. As $K$, $ P\inv Q P\inv $, $M$, and $P\inv$ are all uniformly positive definite and uniformly bounded, and $\phi^\top R\inv \phi$ is positive semidefinite, there exists some $\alpha > 0$ such that 
\begin{align}
    \label{eq:convergence-rate}
        - \begin{bmatrix}
            2 K & 0 \\
            0 & \phi^\top R\inv \phi + P\inv Q P\inv\\
        \end{bmatrix}
        &\preceq - 2 \alpha
        \begin{bmatrix}
            M & 0 \\
            0 & P\inv
        \end{bmatrix}
\end{align}
for all $t$.

Define an upper bound for the disturbance term $D$ as
\begin{align}
    D = \sup_t \left\|
        \begin{bmatrix}
            \BB d \\
            \phi^\top R\inv \bar{\epsilon} - P\inv \lambda \BB a - P\inv \dot{\BB a}
        \end{bmatrix}
        \right\|
\end{align}
and define the function $\metric$,
\begin{align}
    \metric &= 
        \begin{bmatrix}
            M & 0 \\
            0 & P\inv
        \end{bmatrix}
\end{align}
By \cref{eq:convergence-rate}, the Cauchy-Schwartz inequality, and the definition of the minimum eigenvalue, we have the following inequality for $\dot\lyap$:
\begin{align}
    \dot{\lyap} &\leq 
        - 2 \alpha \lyap
        + 2 \sqrt{\frac{\lyap}{\lammin(\metric)}} D
\end{align}
Consider the related systems, $\mathcal{W}$ where $\mathcal{W} = \sqrt{\lyap}$, $2\dot{\mathcal{W}} \mathcal{W} = \dot\lyap$, and the following three equations hold
\begin{align}
    2\dot{\mathcal{W}}\mathcal{W} &\leq - 2 \alpha \mathcal{W}^2 +  \frac{2 D \mathcal{W}}{\sqrt{\lammin(\metric)}}
    \\
    \dot{\mathcal{W}} &\leq - \alpha \mathcal{W} +  \frac{D}{\sqrt{\lammin(\metric)}}
\end{align}
By the Comparison Lemma \cite{khalil_nonlinear_2002},
\begin{align}
    \sqrt{\lyap} = \mathcal{W} \leq \expo{-\alpha t} \left(\mathcal{W}(0) - \frac{D}{\alpha \sqrt{\lammin(\metric)}} \right) + \frac{D}{\alpha \sqrt{\lammin(\metric)}}
\end{align}
and the stacked state exponentially converges to the ball
\begin{align}
    \lim_{t\rightarrow\infty}\left\| \stackedstate \right\| \leq \frac{D}{\alpha \lammin(\metric)} 
\end{align}
This completes the proof.
\end{proof}

Next, we present a corollary which shows the exponential convergence of $\tilde{\BB q}$ when $\BB s$ is exponentially stable.
\begin{corollary}
\label{cor:q-bound}

If $\|\BB s(t)\|\leq A \exp(-\alpha t) + B / \alpha$ for some constants $A$, $B$, and $\alpha$, and $ \BB s = \dot{\tilde{\BB q}} + \Lambda \tilde{\BB q}$, then 
\begin{align}
\label{eq:q-bound-messy}
\|\tilde{\BB q}\| \leq \expo{-\lammin(\Lambda)t}\|\tilde{\BB q}(0)\| + \int_0^t \expo{-\lammin(\Lambda)(t-\tau)} A \expo{- \alpha \tau}\text{\emph{d}}\tau +
\int_0^t\expo{-\lammin(\Lambda)(t-\tau)}\frac{B}{\alpha} \text{\emph{d}}\tau
\end{align}
thus $\|\tilde{\BB q}\|$ exponentially approaches the bound
\begin{align}
    \label{eq:q-cor-bound}
    \lim_{t\rightarrow\infty}\|\tilde{\BB q}\| &\leq \frac{B}{\alpha \lammin(\Lambda)}
\end{align}
\end{corollary}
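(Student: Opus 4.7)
}
The plan is to recognize the relation $\BB s = \dot{\tilde{\BB q}} + \Lambda \tilde{\BB q}$ as a first-order linear matrix ODE in $\tilde{\BB q}$ forced by $\BB s$, solve it in closed form using the matrix integrating factor $\expo{\Lambda t}$, take norms, and then feed in the hypothesized exponential bound on $\|\BB s(t)\|$ to read off both the explicit bound in \cref{eq:q-bound-messy} and the asymptotic bound in \cref{eq:q-cor-bound}.

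First I would multiply both sides of $\dot{\tilde{\BB q}} + \Lambda \tilde{\BB q} = \BB s$ by $\expo{\Lambda t}$ to obtain $\tfrac{d}{dt}(\expo{\Lambda t}\tilde{\BB q}) = \expo{\Lambda t}\BB s$, and integrate from $0$ to $t$ to get the variation-of-constants formula
\begin{equation*}
\tilde{\BB q}(t) = \expo{-\Lambda t}\tilde{\BB q}(0) + \int_0^t \expo{-\Lambda(t-\tau)}\BB s(\tau)\,\text{d}\tau.
\end{equation*}
Next, since $\Lambda$ is a symmetric positive definite gain matrix, the operator norm satisfies $\|\expo{-\Lambda u}\| \leq \expo{-\lammin(\Lambda)u}$ for all $u\geq 0$. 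Applying the triangle inequality and the submultiplicativity of the norm to the integral representation then yields
\begin{equation*}
\|\tilde{\BB q}(t)\| \leq \expo{-\lammin(\Lambda)t}\|\tilde{\BB q}(0)\| + \int_0^t \expo{-\lammin(\Lambda)(t-\tau)}\|\BB s(\tau)\|\,\text{d}\tau.
\end{equation*}
Substituting the hypothesis $\|\BB s(\tau)\| \leq A\expo{-\alpha\tau} + B/\alpha$ and splitting the integral into its two pieces gives exactly the three-term bound in \cref{eq:q-bound-messy}.

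For the asymptotic claim, I would evaluate each term in the limit $t\to\infty$. The homogeneous term $\expo{-\lammin(\Lambda)t}\|\tilde{\BB q}(0)\|$ vanishes exponentially. The convolution $\int_0^t \expo{-\lammin(\Lambda)(t-\tau)}A\expo{-\alpha\tau}\text{d}\tau$ evaluates to a linear combination of $\expo{-\alpha t}$ and $\expo{-\lammin(\Lambda)t}$ (or to $tA\expo{-\alpha t}$ in the degenerate case $\alpha=\lammin(\Lambda)$) and likewise tends to zero. Finally, the constant-forcing convolution integrates to $(B/\alpha)(1-\expo{-\lammin(\Lambda)t})/\lammin(\Lambda)$, whose limit is $B/(\alpha\lammin(\Lambda))$, giving \cref{eq:q-cor-bound}.

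I do not expect a true obstacle here; the result is a standard input-to-state-stability style argument for a first-order linear filter. The only subtlety worth flagging is the matrix-norm step $\|\expo{-\Lambda u}\|\leq \expo{-\lammin(\Lambda)u}$, which relies on $\Lambda$ being symmetric positive definite (consistent with its role as a gain matrix in the definition of $\BB s$); if $\Lambda$ were merely positive definite but nonsymmetric, one would need to work in a $\Lambda$-adapted inner product or bound $\|\expo{-\Lambda u}\|$ via the logarithmic norm rather than the minimum eigenvalue, but this does not arise in the paper's setting.
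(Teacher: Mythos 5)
Your proposal is correct and follows essentially the same route as the paper: the paper invokes the Comparison Lemma to obtain \cref{eq:q-bound-messy} and then evaluates the two convolution integrals explicitly before taking the limit, exactly as you do. Your use of the matrix variation-of-constants formula with $\|\expo{-\Lambda u}\|\leq\expo{-\lammin(\Lambda)u}$ is a cosmetic variant of the same scalar differential-inequality argument, and your remark about the degenerate case $\alpha=\lammin(\Lambda)$ is a fair (if minor) point that the paper's closed-form expression glosses over without affecting the limit.
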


\begin{proof}
From the Comparison Lemma \cite{khalil_nonlinear_2002}, we can easily show \cref{eq:q-bound-messy}. This can be further reduced as follows.
\begin{align}
    \|\tilde{\BB q}\| 
    &\leq \expo{-\lammin(\Lambda)t}\|\tilde{\BB q}(0)\| + A\expo{-\lammin(\Lambda)t}\int_0^t \expo{(\lammin(\Lambda)- \alpha) \tau}\text{d}\tau +
    \int_0^t\expo{-\lammin(\Lambda)(t-\tau)}\frac{B}{\alpha} \text{d}\tau\\
    &
    \leq \expo{-\lammin(\Lambda)t}\|\tilde{\BB q}(0)\| + A\frac{\expo{-\alpha t}-\expo{-\lammin(\Lambda)t}}{\lammin(\Lambda)-\alpha}+ \frac{B \left(1-\expo{-\lammin(\Lambda)t}\right)}{\alpha \lammin(\Lambda)}
\end{align}
Taking the limit, we arrive at \cref{eq:q-cor-bound}

\end{proof}

With the following corollary, we will justify that $\alpha$ is strictly positive even when $\phi\equiv0$, and thus the adaptive control algorithm guarantees robustness even in the absence of persistent excitation or with ineffective learning.
In practice we expect some measurement information about all the elements of $\BB a$, that is, we expect a non-zero $\phi$. 
\begin{corollary}
If $\phi\equiv0$, then the bound in \cref{eq:exponential-bound-long} can be simplified to
\begin{align}
    \label{eq:worst-case-bound}
    \lim_{t\rightarrow\infty}\left\| \stackedstate \right\| &\leq \frac{
        \sup\|\BB d\| + 
        \lammax(P\inv)\sup(\|\lambda \BB a + \dot{\BB a}\|)
    }{
        \min\left(\lambda, \lammin(K)/\lammax(M)\right)\lammin(\metric)
    }
\end{align}
\end{corollary}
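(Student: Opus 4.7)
The plan is to substitute $\phi \equiv 0$ into the bound of Theorem \ref{thm:s-stability} and simplify the two pieces: the disturbance supremum in the numerator, and the convergence rate $\alpha$ in the denominator. The first is essentially free: when $\phi \equiv 0$, the term $\phi^\top R^{-1}\bar{\epsilon}$ in the disturbance block of \cref{eq:stacked-dynamics} vanishes identically, so $\sup_t \|\phi^\top R^{-1}\bar{\epsilon}\| = 0$ in \cref{eq:exponential-bound-long}. This already removes the middle term from the numerator.

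The main work is computing the largest $\alpha$ satisfying the matrix inequality
\begin{align*}
\begin{bmatrix} 2K & 0 \\ 0 & \phi^\top R^{-1}\phi + P^{-1} Q P^{-1} \end{bmatrix}
\succeq 2\alpha \begin{bmatrix} M & 0 \\ 0 & P^{-1} \end{bmatrix}
\end{align*}
of \cref{eq:convergence-rate}. With $\phi \equiv 0$ the two blocks decouple. The upper block requires $K \succeq \alpha M$; applying the standard bound $v^\top K v \geq \lammin(K)\|v\|^2$ and $v^\top M v \leq \lammax(M)\|v\|^2$ gives the sufficient condition $\alpha \leq \lammin(K)/\lammax(M)$. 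The lower block requires $P^{-1} Q P^{-1} \succeq 2\alpha P^{-1}$, equivalently $Q \succeq 2\alpha P$ after conjugating by $P$.

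To handle the lower block, I would exploit the fact that $\phi \equiv 0$ reduces the Riccati equation \cref{eq:adaptation-law-P} to the linear ODE $\dot P = -2\lambda P + Q$, whose solution $P(t) = e^{-2\lambda t} P(0) + \frac{1}{2\lambda}(I - e^{-2\lambda t})Q$ converges exponentially to $P_{\infty} = Q/(2\lambda)$. At the steady state, $Q = 2\lambda P_\infty$, so the condition $Q \succeq 2\alpha P$ is tight with $\alpha = \lambda$. Since we are bounding the quantity $\lim_{t\to\infty}\|[\BB s;\tilde{\BB a}]\|$, only the asymptotic value of $\alpha$ matters, and we may take $\alpha = \min(\lambda,\,\lammin(K)/\lammax(M))$. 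Substituting this $\alpha$ and the vanished middle term into \cref{eq:exponential-bound-long} yields exactly \cref{eq:worst-case-bound}.

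The main technical obstacle is the asymptotic-versus-uniform subtlety in the lower block: for generic $P(0)$ the instantaneous convergence rate can be slightly different from $\lambda$ during a transient. One way to sidestep this cleanly is to observe that any $P(0) \preceq Q/(2\lambda)$ gives $P(t) \preceq Q/(2\lambda)$ for all $t$ by monotonicity of the linear ODE, so $Q P^{-1}(t) \succeq 2\lambda I$ uniformly and the stated rate $\alpha = \lambda$ is valid from $t=0$; otherwise it only kicks in asymptotically, which still suffices for the $\lim_{t\to\infty}$ statement. Either way, the corollary follows by direct substitution into Theorem \ref{thm:s-stability}.
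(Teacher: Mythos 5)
Your proposal is correct and follows essentially the same route as the paper's own proof: set $\phi\equiv0$ so the $\phi^\top R^{-1}\bar{\epsilon}$ term vanishes, read off $\alpha=\min\left(\tfrac{1}{2}\lammin(P^{-1}Q),\,\lammin(K)/\lammax(M)\right)$ from the decoupled blocks of \cref{eq:convergence-rate}, and use the fact that the simplified linear equation $\dot{P}=-2\lambda P+Q$ drives $P$ to $Q/(2\lambda)$ so that the first argument of the $\min$ becomes $\lambda$. Your remark on the transient versus asymptotic validity of the rate is a point the paper glosses over, and your resolution (the limit statement only needs the asymptotic rate, or restrict $P(0)\preceq Q/(2\lambda)$) is sound.
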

\begin{proof}
Assuming $\phi\equiv0$ immediately leads to $\alpha$ of
\begin{align}
    \alpha &= \min\left(\frac{1}{2} \lammin( P\inv Q ), \frac{\lammin(K)}{\lammax(M)}\right)
\end{align}
$\phi\equiv0$ also simplifies the $\dot{P}$ equation to a stable first-order differential matrix equation. By integrating this simplified $\dot{P}$ equation, we can show $P$ exponentially converges to the value $P=\frac{Q}{2\lambda}$. This leads to bound in \cref{eq:worst-case-bound}.

\end{proof}

We now introduce another corollary for the \nfconstant, when $\phi=I$. In this case, the regularization term is not needed, as it is intended to regularize the linear coefficient estimate in the absence of persistent excitation, so we set $\lambda=0$. This corollary also shows that \nfconstant~is sufficient for perfect tracking control when $\BB f$ is constant; though in this case, even \rev{the nonlinear baseline controller with} integral control will converge to perfect tracking. In practice for quadrotors, we only expect $\BB f$ to be constant when the drone air-velocity is constant, such as in hover or steady level flight with constant wind velocity.
\begin{corollary}
If $\phi\equiv I$, $Q=q I$, $R=r I$, $\lambda=0$, and $P(0)=p_0 I$ is diagonal, where $q$, $r$ and $p_0$ are strictly positive scalar constants, then the bound in \cref{eq:exponential-bound-long} can be simplified to
\begin{align}
    \lim_{t\rightarrow\infty}\left\| \stackedstate \right\| &\leq \frac{ \left(\left(1 + r^{-1}\right) \sup_t\|\BB f - \BB a\| \rev{+ \epsilon/r}\right) \lammax(M)}{\lammin(K) \lammin(\metric)}
\end{align}
\end{corollary}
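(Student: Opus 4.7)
The plan is to specialize \cref{thm:s-stability} to the \nfconstant~setting by exploiting the scalar structure induced by $\phi \equiv I$, $Q = qI$, $R = rI$, $\lambda = 0$, and $P(0) = p_0 I$. Under these choices the matrix Riccati equation $\dot{P} = Q - P\phi^\top R^{-1}\phi P$ decouples: writing $P(t) = p(t) I$ reduces it to the scalar ODE $\dot{p} = q - p^2/r$, which converges exponentially to $p_\infty = \sqrt{qr}$ from any $p_0 > 0$. This gives the uniform positive definiteness and uniform boundedness of $P$ required to invoke \cref{thm:s-stability}.

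Next I would pick the convergence rate $\alpha = \lammin(K)/\lammax(M)$ and verify the block-diagonal LMI used inside the proof of \cref{thm:s-stability}. The top-left block $2K \succeq 2\alpha M$ is immediate. The bottom-right block $r^{-1} I + q P^{-2} \succeq 2\alpha P^{-1}$, after conjugation by $P^{1/2}$, becomes $r^{-1} P + q P^{-1} \succeq 2\alpha I$; on the scalar invariant $P = p I$ its minimum over $p > 0$ equals $2\sqrt{q/r}$ at $p = p_\infty$, so the LMI holds provided $\sqrt{q/r} \geq \lammin(K)/\lammax(M)$. The corollary's form implicitly assumes this gain compatibility so that the position-loop rate is the binding constraint on $\alpha$.

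Finally I would simplify the forcing vector on the right-hand side of \cref{eq:stacked-dynamics}. With $\phi = I$, $\lambda = 0$, and $\bar{\epsilon} = \epsilon + \BB d$, it reduces to $[\BB d;\, r^{-1}(\epsilon + \BB d) - P^{-1}\dot{\BB a}]$. Interpreting $\BB a$ as a constant representative of the residual force (so $\dot{\BB a} = 0$) and using $\BB d = \BB f - \phi \BB a = \BB f - \BB a$, the triangle inequality yields $D \leq (1 + r^{-1})\sup_t \|\BB f - \BB a\| + r^{-1}\sup_t\|\epsilon\|$. Substituting $D$ and $\alpha$ into the bound $\|[\BB s;\,\tilde{\BB a}]\| \leq D/(\alpha\,\lammin(\metric))$ from \cref{thm:s-stability} and clearing the $\lammax(M)$ factor produces the stated expression. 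The main obstacle is the LMI check in step two, as admitting $\alpha = \lammin(K)/\lammax(M)$ requires the implicit gain compatibility $\sqrt{q/r} \geq \lammin(K)/\lammax(M)$; the remaining work is direct substitution into the already-proved bound.
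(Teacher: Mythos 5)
Your proposal is correct and follows essentially the same route as the paper's proof: reduce the Riccati equation to the scalar ODE $\dot p = q - p^2/r$ converging to $\sqrt{qr}$, take $\alpha = \lammin(K)/\lammax(M)$ under the compatibility condition $\sqrt{q/r}\geq\lammin(K)/\lammax(M)$ (which the paper imposes explicitly by choosing $\sqrt{q/r}=\lammin(K)/\lammax(M)$ rather than leaving it implicit), and bound $D$ via the triangle inequality with $\dot{\BB a}=0$ and $\BB d=\BB f-\BB a$. Your explicit AM--GM verification that the bottom-right block of the convergence-rate inequality attains its minimum $2\sqrt{q/r}$ at $p=\sqrt{qr}$ is a slightly more careful rendering of the paper's bare assertion that $\alpha\leq\sqrt{q/r}$, but it is the same argument.
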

\begin{proof}
Under these assumptions, the matrix differential equation for $P$ is reduced to the scalar differential equation
\begin{align}
    \frac{dp}{dt} &= q - p^2/r
\end{align}
where $P(t)=p(t) I$. This equation can be integrated to find that $p$ exponentially converges to $p = \sqrt{q r}$. Then by \cref{eq:convergence-rate}, $\alpha \leq \sqrt{q/r}$ and $\alpha \leq \lammin(K)/\lammax(M)$. If we choose $q$ and $r$ such that $\sqrt{q/r} = \lammin(K)/\lammax(M)$, then we can take $\alpha = \lammin(K)/\lammax(M)$. Then, the error bound reduces to 
\begin{align}
    \lim_{t\rightarrow\infty}\left\| \stackedstate \right\| &\leq \frac{D \lammax(M)}{\lammin(K) \lammin(\metric)}
\end{align}
Take $\BB a$ as a constant. Then $\dot{\BB a}=0$, $\BB d=\BB f - \BB a$, and $D$ is bounded by
\begin{align}
    D &\leq  \left(1 + r^{-1}\right) \sup_t\|\BB f - \BB a\| + \epsilon/r
\end{align}

\end{proof}

\subsection{Gain Tuning}
\label{sec:supp-gain-tuning}

The attitude controller was tuned following the method in \cite{noauthor_multicopter_nodate}.
\rev{The gains for all of the position controllers tested were tuned on a step input of \SI{1}{m} in the x-direction. The proportional (P) and derivative (D) gains were tuned using the baseline nonlinear controller for good rise time with minimal overshoot or oscillations. The same P and D gains were used across all methods.} 

\rev{The integral and adaptation gains were tuned separately for each method. In each case, the gains were increased to minimize response time until we observed having large overshoot, noticeably jittery, or oscillatory behavior. For $\LOne$ and INDI this gave a first-order filters with a cutoff frequency of \SI{5}{Hz}. For each of the \nf~methods, we used $R=r I$ and $Q=q I$, where $r$ and $q$ are a scalar values. The tuning method gave an $R$ gains similar to the measurement noise of the residual force, a $Q$ values on the order of $0.1$, and $\lambda$ values of $0.01$.}

\subsection{Force Prediction Performance}
\label{supp:prediction-performance}

The section discusses fig.~\ref{fig:force-prediction}, which is useful for understanding why learning improves force prediction (which in turn improves control).

\begin{figure}
    \centering
    \includegraphics[width=\textwidth]{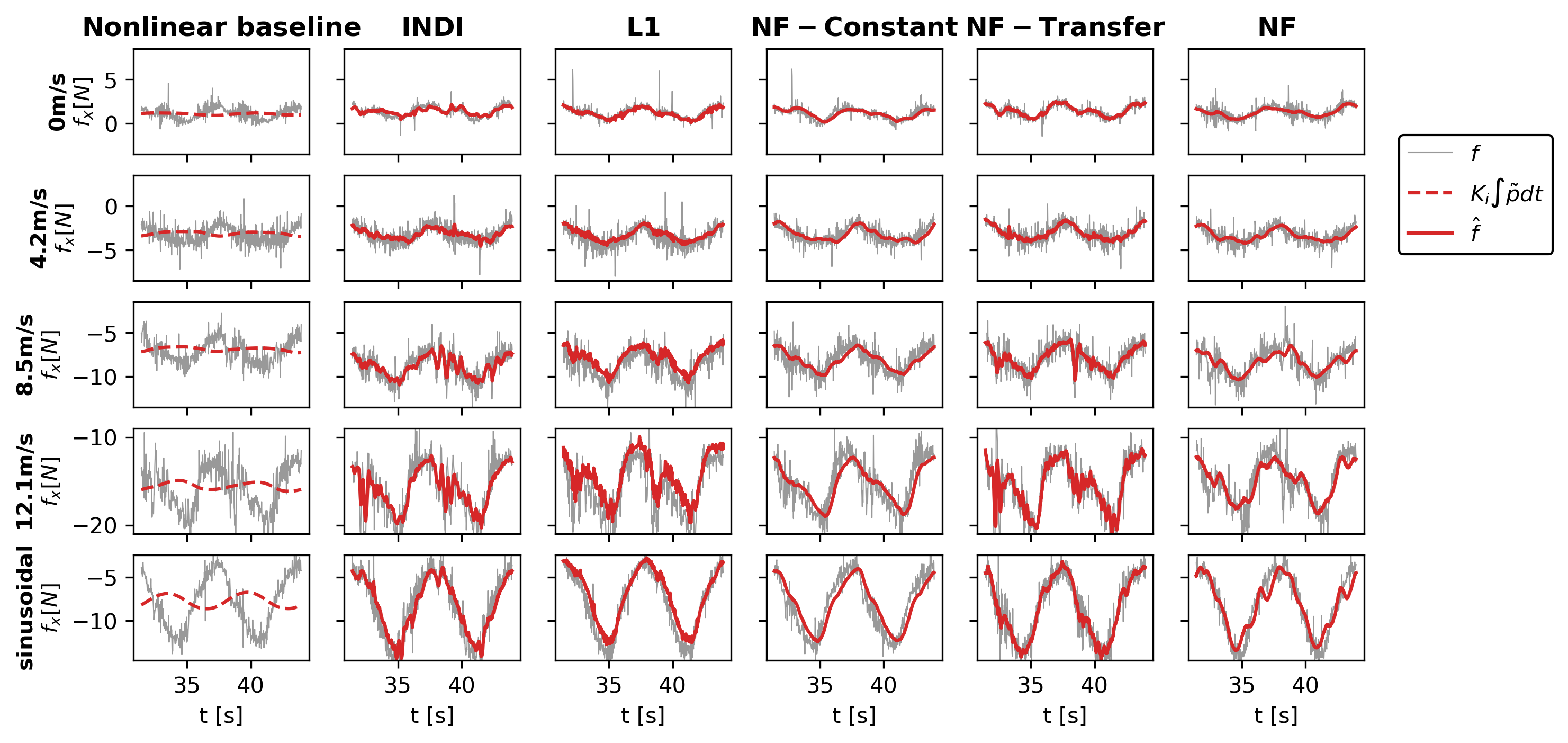}
    \includegraphics[width=\textwidth]{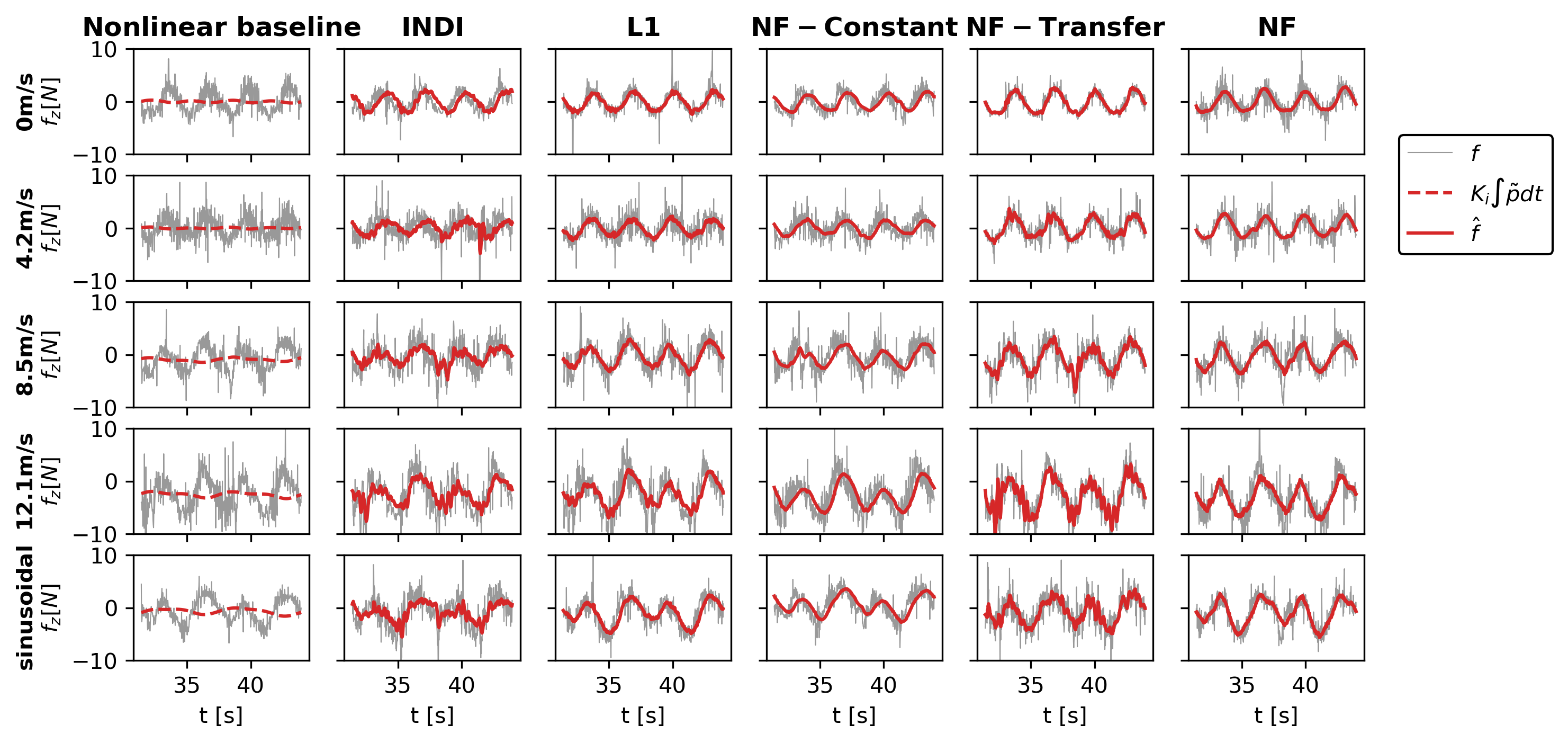}
    \caption{
        \textbf{Measured residual force versus adaptive control augmentation, $\hat{\BB f}$.} Wind-effect x- and z-axis force prediction for different methods, $\hat{\BB f}$ and $K_i \int\tilde{\BB p}\text{d}t$, compared with the online residual force measurement, $\BB f$. The integral term in the nonlinear baseline method and the $\hat{\BB f}$ term in the adaptive control methods, including the \nf~methods, all act to compensate for the measured residual force. INDI, L1, and \nfconstant~estimate the residual force with sub-second lag, however adjusting the gains to decrease the lag increases noise amplification. \nf~and \nftransfer~have reduced the lag in estimating the residual force but have some model mismatch, especially at higher wind speeds.}
    \label{fig:force-prediction}
\end{figure}

For the nonlinear baseline method, the integral (I) term compensates for the average wind effect, as seen in fig.~\ref{fig:force-prediction}. Thus, the UAV trajcetory remains roughly centered on the desired trajectory for all wind conditions, as seen in fig.~\ref{fig:trajectory}. The relative velocity of the drone changes too quickly for the integral-action to compensate for the changes in the wind effect. Although increasing the I gain would allow the integral control to react more quickly, a large I gain can also lead to overshoot and instability, thus the gain is effectively limited by the combined stability of the P, D, and I gains. 

Next, consider the two SOTA baseline methods, INDI and $\LOne$, along with the non-learning version of our method, \nfconstant. These methods represent different adaptive control approaches that assume no prior model for the residual dynamics. Instead, each of these methods effectively outputs a filtered version of the measured residual force and the controller compensates for this adapted term. In fig.~\ref{fig:force-prediction}, we observe that each of these methods has a slight lag behind the measured residual force, in grey. This lag is reduced by increasing the adaptation gain, however, increasing the adaptation gain leads to noise amplification. Thus, these \emph{reactive} approaches are limited by some more inherent system properties, like measurement noise.

Finally, consider the two learning versions of our method, \nf~and \nftransfer. These methods use a learned model in the adaptive control algorithm. Thus, once the linear parameters have adapted to the current wind condition, the model can predict future aerodynamic effects with minimal changes to the coefficients. As we extrapolate to higher wind speeds and time-varying conditions, some model mismatch occurs and is manifested as discrepancies between the predicted force, $\hat{\BB f}$, and the measured force, $\BB f$, as seen in fig.~\ref{fig:force-prediction}. Thus, our learning based control is limited by the learning representation error. This matches the conclusion drawn in our theoretical analysis, where tracking error scales linearly with representation error.


\subsection{Localization Error Analysis}
\label{sec:supp-localization-error}

We estimate the root mean squared position localization precision to be about \SI{1}{cm}. This is based on a comparison of our two different localization data sources. The first is the OptiTrack motion capture system, which uses several infrared motion tracking cameras and reflective markers on the drone to produce a delayed measurement the position and orientation of the vehicle. The PX4 flight controller runs an onboard extended Kalman filter (EKF) to fuse the OptiTrack measurements with onboard inertial measurment unit (IMU) measurements to produce position, orientation, velocity, and angular rate estimates. In offline analysis, we correct for the delay of the OptiTrack system, and compare the position outputs of the OptiTrack system and the EKF. Typical results are shown in fig.~\ref{fig:localization-good}. The fixed offset between the measurements occurs because the OptiTrack system tracks the centroid of the reflective markers, where the EKF tracks the center of mass of the vehicle. Although the EKF must internally correct for this offset, we do not need to do so in our offline analysis because the offset is fixed. Thus, the mean distance between the the OptiTrack position and the EKF position corresponds to the distance between the center of mass and the center of vision, and the standard deviation of that distance is the root-mean-square error of the error between the two estimates. Averaged over all of the data from experiments in this paper, we see that the standard deviation is \SI{1.0}{cm}. Thus, we estimate that the localization precision has a standard deviation of about \SI{1.0}{cm}.

\begin{figure}
    \centering
    \includegraphics[width=\linewidth]{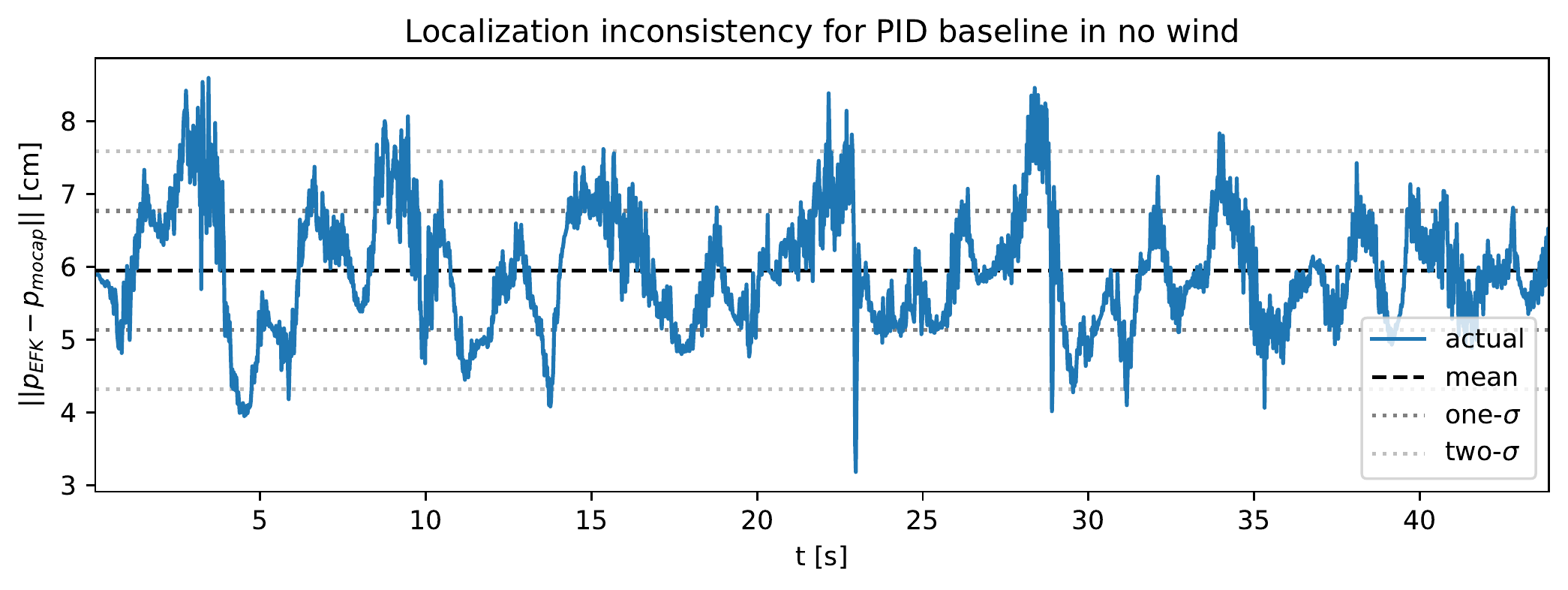}
    \caption{\textbf{Localization inconsistency} Typical difference between the OptiTrack motion capture position measurement, $\BB p_\text{mocap}$, and the EKF position estimate, $\BB p_\text{EKF}$, corrected for the Optitrack delay. The mean difference corresponds to a constant offset between the center of mass, which the EKF tracks, and the centroid of reflective markers, which the OptiTrack measures. The standard deviation corresponds to the root-mean-square error between the two measurements.}
    \label{fig:localization-good}
\end{figure}

\end{document}